\tikzset{
  treenode/.style = {align=center, inner sep=0pt, text centered,
    font=\sffamily},
  arn_n/.style = {treenode, circle, white, font=\sffamily\bfseries, draw=black,
    fill=black, text width=1.5em},
  arn_r/.style = {treenode, circle, red, draw=red, 
    text width=1.5em, very thick},
  arn_x/.style = {treenode, rectangle, draw=black,
    minimum width=0.5em, minimum height=0.5em},
    arn_d/.style = {treenode, diamond, white,  draw=black,
    fill=black, text width=1.5em},
    arn_t/.style = {treenode, diamond, red, draw=red, text width=1.5em, very thick},
}
\tikzstyle{block} = [rectangle, draw, fill=blue!20, 
\newtheorem{theorem}{Theorem}
\newtheorem{corollary}{Corollary}[theorem]
\newtheorem{proposition}[theorem]{Proposition}
\newtheorem*{lemma*}{Lemma}	
\icmltitlerunning{Learning to Search via Retrospective Imitation}
\begin{document}

\twocolumn[
\icmltitle{Learning to Search via Retrospective Imitation}



\icmlsetsymbol{equal}{*}

\begin{icmlauthorlist}
\icmlauthor{Jialin Song}{equal,caltech}
\icmlauthor{Ravi Lanka}{equal,jpl}
\icmlauthor{Albert Zhao}{caltech}
\icmlauthor{Aadyot Bhatnagar}{caltech}
\icmlauthor{Yisong Yue}{caltech}
\icmlauthor{Masahiro Ono}{jpl}
\end{icmlauthorlist}

\icmlaffiliation{caltech}{California Institute of Technology}
\icmlaffiliation{jpl}{Jet Propulsion Laboratory}

\icmlcorrespondingauthor{Jialin Song}{jssong@caltech.edu}

\icmlkeywords{Machine Learning, ICML}

\vskip 0.3in
]


\printAffiliationsAndNotice{\icmlEqualContribution} 

\begin{abstract}
We study the problem of learning a good search policy for combinatorial search spaces.
We propose retrospective imitation learning, which, after initial training by an expert, improves itself by learning from \textit{retrospective inspections} of its own roll-outs. That is, when the policy eventually reaches a feasible solution in a combinatorial search tree after making mistakes and backtracks, it retrospectively constructs an improved search trace to the solution by removing backtracks, which is then used to further train the policy. 
A key feature of our approach is that it can iteratively scale up, or transfer, to larger problem sizes than those solved by the initial expert demonstrations, thus dramatically expanding its applicability beyond that of conventional imitation learning. We showcase the effectiveness of our approach on a range of tasks, including synthetic maze solving and combinatorial problems expressed as integer programs. 
%
%
%
%
\end{abstract}

\section{Introduction}
\label{intro}



Many challenging tasks involve traversing a combinatorial search space. Examples include  
branch-and-bound for constrained optimization problems \citep{lawler1966branch}, A* search for path planning \citep{hart1968formal} and  game playing, e.g. Go \citep{silver2016mastering}. 
Since the search space often grows exponentially with problem size, one key challenge is how to prioritize traversing the search space. A conventional approach is to manually design heuristics that exploit specific structural assumptions (cf. \citet{gonen2000optimal,holmberg2000lagrangian}). However, this conventional approach is labor intensive and relies on human experts developing a strong understanding of the structural properties of some class of problems. 
%
%
%
%
%
%
%
%
%

 In this paper, we take a learning approach to finding an effective search heuristic. We cast the problem as policy learning for sequential decision making, where the environment is the combinatorial search problem. Viewed in this way, a seemingly natural approach to consider is reinforcement learning, where the reward comes from finding a feasible terminal state, e.g. reaching the target in A* search. However, in our problem, most terminal states are not feasible, so the reward signal is sparse; hence, we do not expect reinforcement learning approaches to be effective.


We instead build upon imitation learning \citep{ross2010efficient,ross2010reduction,daume2009search,he2014learning}, which is a promising paradigm here since an initial set of solved instances (i.e., demonstrations) can often be obtained from  existing solvers, which we also call experts.
However, obtaining solved instances can be expensive, especially for large problems. Hence, one key challenge is to avoid repeatedly querying experts during {\it training}.

We propose the \textit{retrospective imitation} approach, where the policy can iteratively learn from its own mistakes without repeated expert feedback.  Instead, we use a {\it retrospective oracle} to generate feedback by  querying the environment on rolled-out search traces (e.g., which part of the trace led to a feasible terminal state) to find the shortest path in hindsight (retrospective optimal trace).

Our approach improves upon previous imitation approaches \citep{ross2010efficient,ross2010reduction,he2014learning} in two aspects.  First, our approach iteratively refines towards solutions that may be higher quality or easier for the policy to find than the original  demonstrations.  Second and more importantly, our approach can scale to larger problem instances than the original demonstrations, allowing our approach to scale up to problem sizes beyond those that are solvable by the expert, and dramatically extending the applicability beyond that of conventional imitation learning.
We also provide a theoretical characterization for a restricted setting of the general learning problem.
%
%
%
%
%
%

We evaluate on two types of search problems: A* search and branch-and-bound in integer programs. 
We demonstrate that our approach improves upon prior imitation learning work \citep{he2014learning} as well as commercial solvers such as Gurobi (for integer programs).  We further demonstrate  generalization ability by learning to solve larger problem instances than contained in the original training data.  

In summary, our contributions are:
\vspace{-0.1in}
\begin{itemize}
\item We propose retrospective imitation, a general learning framework that generates feedback (via querying the environment) for imitation learning, without repeatedly querying experts.
\vspace{-0.05in}
\item We show how retrospective imitation can scale up beyond the problem size where demonstrations are available, which significantly expands upon the capabilities of imitation learning.
\vspace{-0.05in}
\item We provide theoretical insights on when retrospective imitation can provide improvements over imitation learning, such as when we can reliably scale up.
\vspace{-0.05in}
\item We evaluate empirically on three combinatorial search environments and show improvements over both imitation learning baselines and off-the-shelf solvers.
\end{itemize}




\section{Related Work}
\label{related}

Driven by availability of demonstration data, imitation learning is an increasingly popular learning paradigm, whereby a policy is trained to mimic the decision-making of an expert or oracle \citep{daume2009search,ross2010efficient,ross2010reduction,chang2015learning}. Existing approaches often rely on having access to a teacher at training time to derive learning signals from. In contrast, our retrospective imitation approach can learn from its own mistakes as well as train on larger problem instances than contained in the original supervised training set.

Another popular paradigm for learning for sequential decision making is reinforcement learning (RL) \citep{sutton1998reinforcement}, especially with recent success of using deep learning models as policies \citep{lillicrap2015continuous, mnih2015human}. One major challenge with RL is effective and stable learning when rewards are sparse, as in our setting. In contrast, the imitation learning reduction paradigm \cite{ross2010reduction,chang2015learning} helps alleviate this problem by reducing the learning problem to cost-sensitive classification, which essentially densifies the reward signals.

More generally, machine learning approaches have been used in other optimization settings as well, such for Boolean satisfiability \citep{boyan1998learning},  submodular optimization given side information \citep{SCP},  memory controllers \citep{ipek2008self}, device placement \cite{mirhoseini2017device}, SMT solvers \citep{balunovic2018learning}, and parameter tuning of other solvers \citep{hutter2010automated}.

Our retrospective imitation approach bears some affinity to other imitation learning approaches that aim to exceed the performance of the oracle teacher \citep{chang2015learning}.  One key difference is that we are effectively using retrospective imitation as a form of transfer learning by learning to solve problem instances of increasing size.

Another paradigm for learning to optimize is to learn a policy on-the-fly by using the first few iterations of optimization for training \cite{ipek2008self,khalil2016learning}.  This requires dense rewards as well as stationarity of optimal behavior throughout the search.  Typically, such dense rewards are surrogates of the true sparse reward. We study a complementary setting of learning off-line from demonstrations with sparse environmental rewards.


\section{Problem Setting \& Preliminaries}
\label{problem}

\textbf{Learning Search Policies for Combinatorial Search Problems.} 
Given a combinatorial search problem $P$, a policy  $\pi$ (i.e., a search algorithm) must make a sequence of decisions to traverse a combinatorial search space to find a (good) feasible solution (e.g., a setting of integer variables in an integer program that satisfies all constraints and has good objective value). We focus on combinatorial tree search, where the navigation of the search space is organized as search trees, i.e., our ``environment'' is the search space.  
Given the current ``state'' $s_t$ (the current search tree), which contains the search history so far (e.g., a partial assignment of integer variables), the policy chooses an action $a$, usually a new node to explore, to apply to the current state $s_t$ (i.e., to extend the current partial solution) and transitions to a new state $s_{t+1}$ (a new search tree). The search terminates when a complete feasible solution is found, which we also refer to as reaching a terminal state.
Figure \ref{fig:example} depicts (among other things) example roll-outs, or search traces, of such  policies.

A typical objective is to minimize search time to a terminal state. In general, the transition function is deterministic and known, but navigating a combinatorial search space to find rare terminal states is challenging. Given a training set of problem instances, we can use a learning approach to train $\pi$ to perform well on test problem instances.

\textbf{Imitation Learning.} 
We build upon the imitation learning paradigm to learn a good search policy. Previous work assumes an expert policy $\pi_{expert}$ that provides interactive feedback on the trained policy \cite{he2014learning}. 
The expert can be a human or an (expensive) solver. However, repeated queries to the expert can be prohibitively expensive. 

Our approach is based on the idea that retrospection (with query access to environment) can also generate feedback. A search trace typically has many dead ends and backtracking before finding a terminal state.
Thus, more efficient search traces (i.e., feedback) can be retrospectively extracted by removing backtracking, 
which forms the core algorithmic innovation of our approach (see Section \ref{algo}).  
Retrospective imitation also enables a form of transfer learning, where we iteratively train policies to solve larger problems for which collecting demonstrations is infeasible (e.g., due to computational costs of the original expert solver).


\begin{figure*}[t]
\vspace{-0.1in}
\centering
{\small

\begin{tikzpicture}%
  \node [label=Expert Trace] (expert) {\includegraphics[scale=0.4]{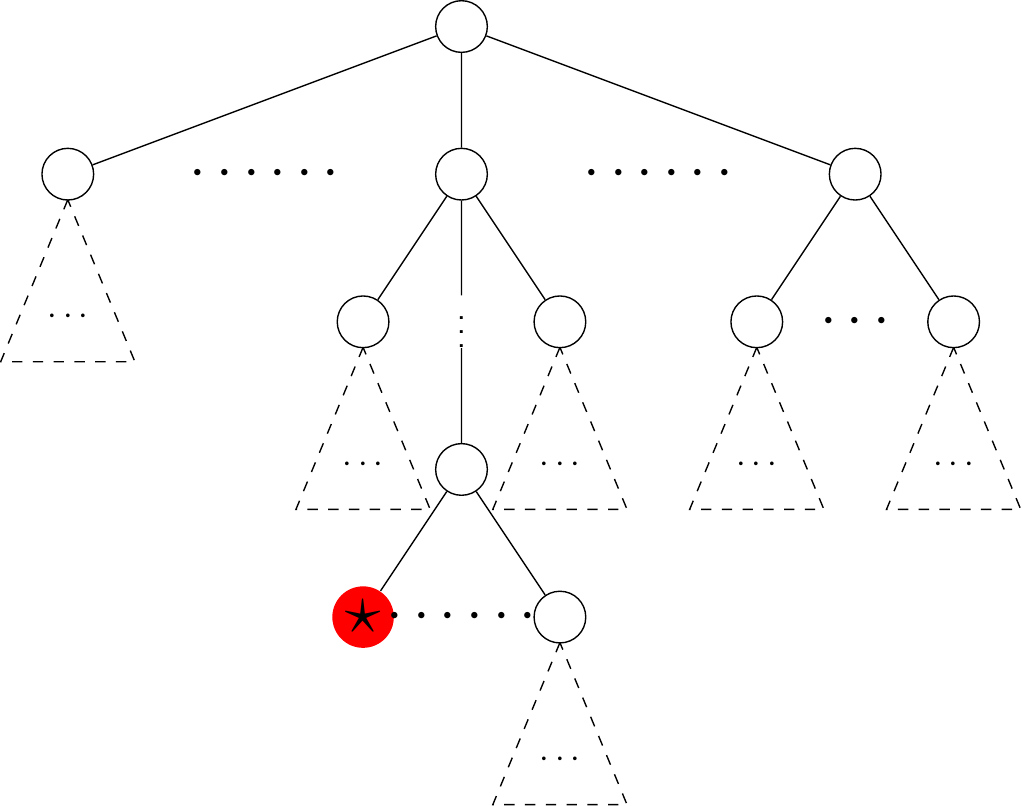}};
  \node (before_retro) [right=6cm of expert, label=Roll-out Trace] {\includegraphics[scale=0.4]{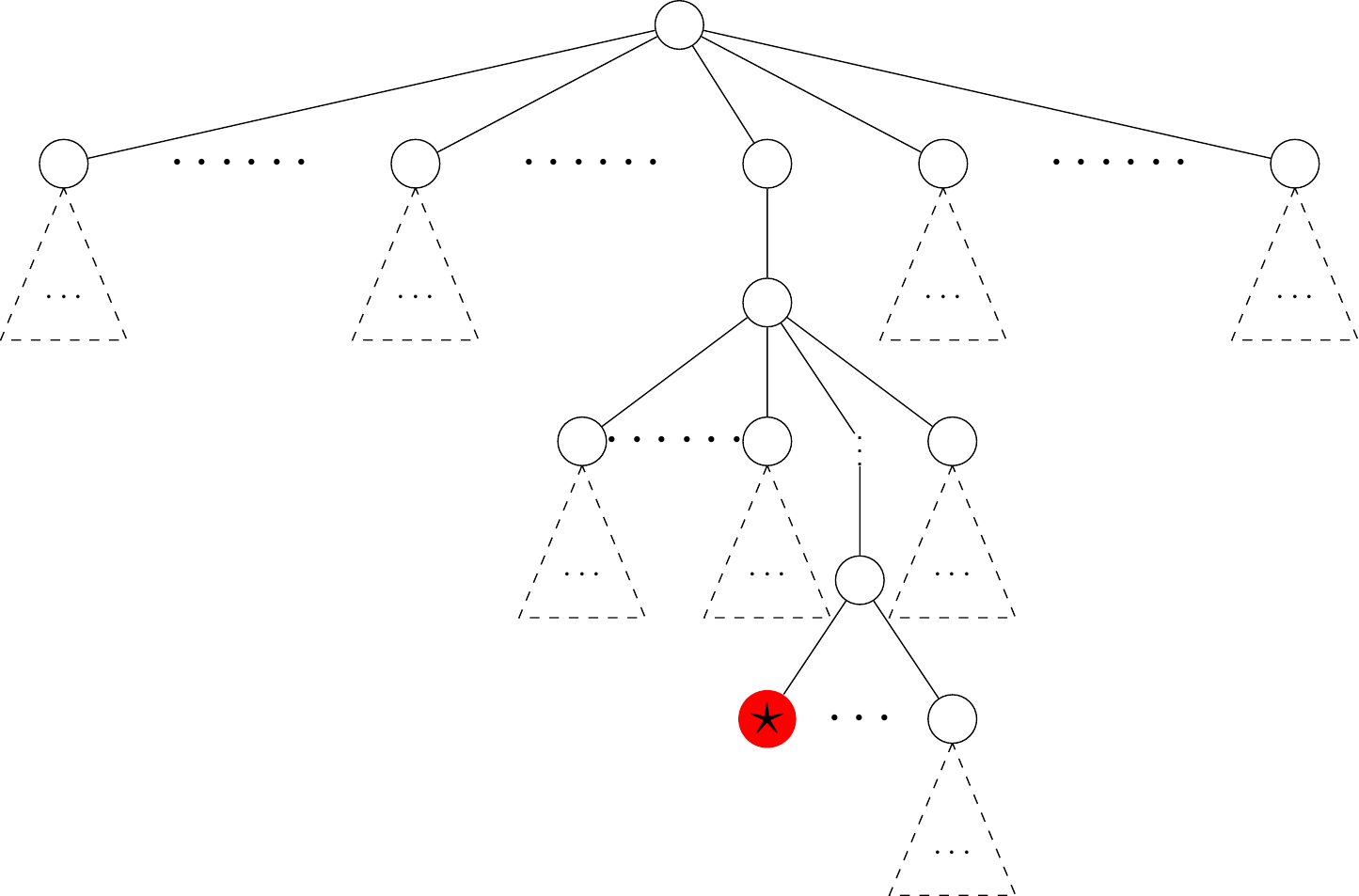}};
  \node (after_retro) [below=0.6cm of before_retro]  {\includegraphics[scale=0.4]{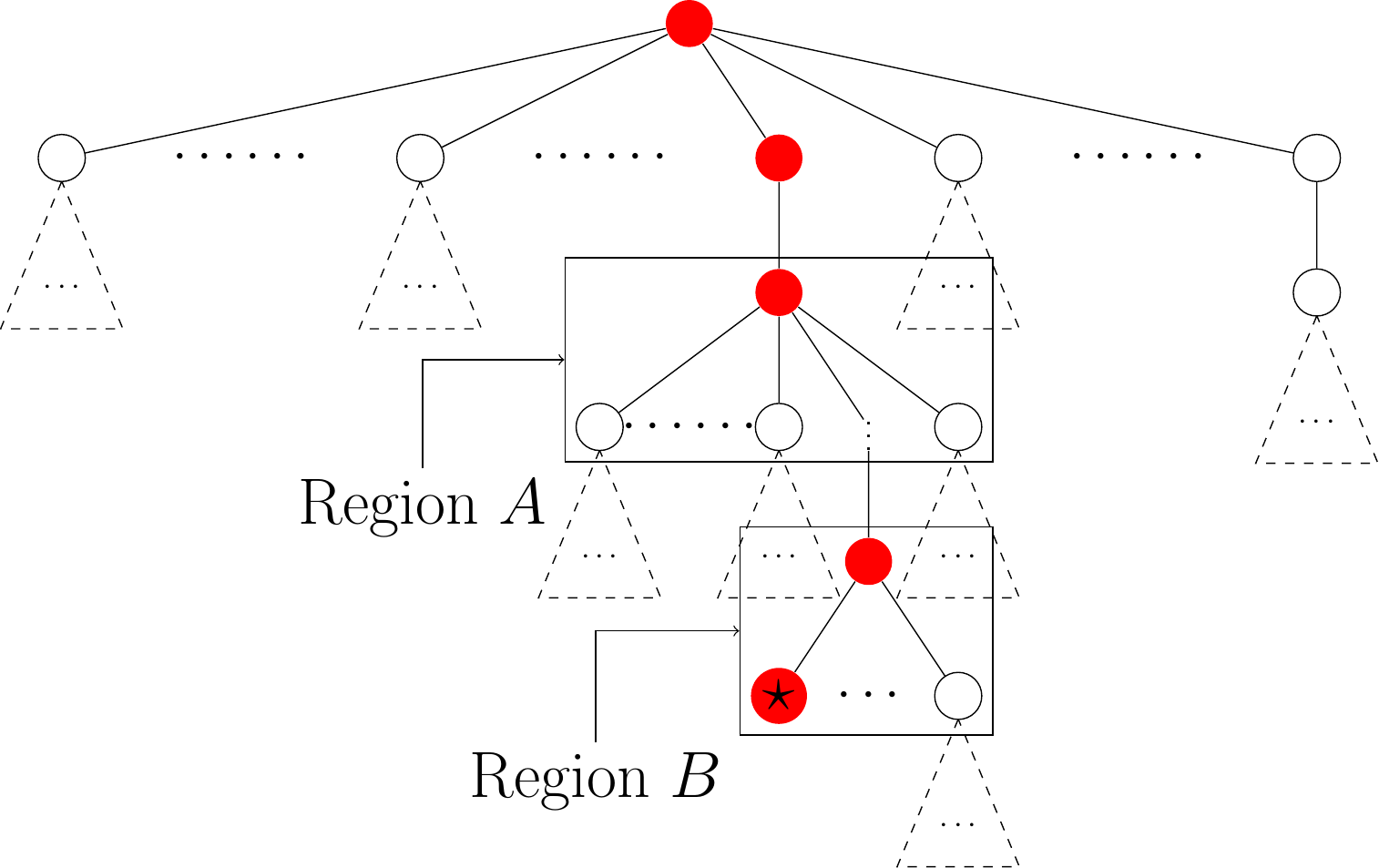}};
  \node [block, left=7.1cm of after_retro] (il_policy) {Imitation Learning Policy};
  \node (retro_feedback) [above=0.5cm] at (after_retro.south west) {Retrospective Oracle Feedback};
  \draw[->, >=stealth'] (expert) edge node[left]{\raisebox{.5pt}{\textcircled{\raisebox{-0.9pt}1}} Initial Learning} (il_policy);
  \draw[->, >=stealth'] (il_policy) edge node[above,  sloped]{\raisebox{.5pt}{\textcircled{\raisebox{-0.9pt}2}} Policy Roll-out (optional exploration)} (before_retro);
  \draw[->, >=stealth'] (before_retro) edge node[left, align=center]{\raisebox{.5pt}{\textcircled{\raisebox{-0.9pt}3}} Retrospective Oracle \\(Algorithm \ref{alg:2})} (after_retro);
  \draw[->, >=stealth'] (after_retro) edge node[above, align=center]{\raisebox{.5pt}{\textcircled{\raisebox{-0.9pt}4}} Policy Update with Further Learning} (il_policy);
\end{tikzpicture}
\vspace{-0.2in}
\caption{A visualization of retrospective imitation learning depicting components of Algorithm \ref{alg:1}. An imitation learning policy is initialized from expert traces and is rolled out to generate its own traces. Then the policy is updated according to the feedback generated by the retrospective oracle as in Figure \ref{fig:zoom}. This process is repeated until some termination condition is met.}
\label{fig:example}}
\end{figure*}

\begin{figure}
\centering
\begin{subfigure}{0.25\textwidth}
\centering
\includegraphics[scale=0.6]{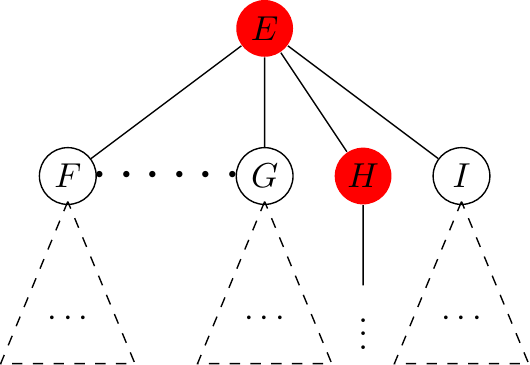}
\end{subfigure}%
~
\begin{subfigure}{0.25\textwidth}
\centering
\includegraphics[scale=0.6]{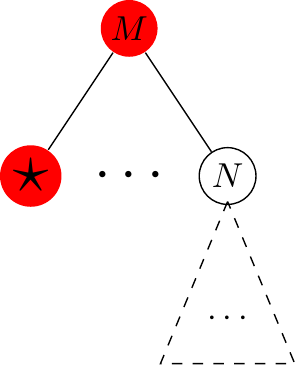}
\end{subfigure}
\vspace{-0.1in}
\caption{Zoom-in views of Region A and B in Figure \ref{fig:example}. At node $E$, the retrospective feedback indicates selecting node $H$ over $F$, $G$ and $I$. At node $M$, the $\star$ node is preferred over $N$.}
\label{fig:zoom}
\end{figure}

\section{Retrospective Imitation Learning}
\label{algo}
We now describe the retrospective imitation learning approach. It is a general framework that can be combined with a variety of imitation learning algorithms. For clarity of presentation, we instantiate our approach using the data aggregation algorithm (DAgger)  \citep{ross2010reduction,he2014learning} and we call the resulting algorithm Retrospective DAgger. We also include the instantiation with SMILe \citep{ross2010efficient} in Appendix \ref{retro_smile}. In Section \ref{sec:experiment}, we empirically evaluate retrospective imitation with both DAgger and SMILe to showcase the generality of our framework.

We decompose our general framework into two steps. First, Algorithm \ref{alg:1} describes our core procedure for learning on fixed size problems with a crucial \textit{retrospective oracle} subroutine (Algorithm \ref{alg:2}). Algorithm \ref{alg:3} then describes how to scale up beyond the fixed size. We will use Figure \ref{fig:example} as a running example. 
The ultimate goal is to enable imitation learning algorithms to scale up to problems much larger than those for which we have expert demonstrations, which is a significant improvement since conventional imitation learning cannot naturally accomplish this. 
%
%
\begin{algorithm}[t]
\begin{small}
    \SetKwInOut{Input}{Input}
    \SetKwInOut{Output}{Output}
	\LinesNumbered
    {\bf Inputs:}\\
    $N$: number of iterations \\
    $\pi_1$: initial policy trained on expert traces\\
    $\alpha$: mixing parameter\\
    $\{P_j\}$: a set of training problem instances\\
    $D_0$: expert traces dataset\\
    
    initialize $D = D_0$ \\
    \For{$i\leftarrow 1$ \KwTo $N$}
      {       
        $\hat{\pi_i} \leftarrow \alpha \pi_i + (1-\alpha)\pi_{explore}$\ \ \ (optionally  explore)\\
        run $\hat{\pi_i}$ on $\{P_j\}$ to generate a set of search traces $\{\tau_j\}$\\
        for each $\tau_j$, compute $\pi^*(\tau_j, s)$ for each terminal state $s$ (Algorithm 2)\\
        collect new dataset $D_i$ based on each $\pi^*(\tau_j, s)$\\
        update $D$ with $D_i$ (i.e., $D \leftarrow D \cup D_i$)\\
        train $\pi_{i+1}$ on $D$\\
      }
      {
        return best $\pi_i$ on validation
      }
      \end{small}
    \caption{Retrospective DAgger for Fixed Size}
    \label{alg:1}
\end{algorithm} 

\begin{algorithm}[t]
\begin{small}
    \SetKwInOut{Input}{Input}
    \SetKwInOut{Output}{Output}
    {\bf Inputs:}\\
	$\tau$: search tree trace\\
    $s \in \tau$: terminal state\\
    {\bf Output:}\\
    retro\_optimal: the retrospective optimal trace \\
    \While{$s$ is not the root}
    { 
		$\text{parent} \leftarrow s.{\text{parent}}$\\
        $\text{retro\_optimal(parent)} \leftarrow s$\\
        $s \leftarrow \text{parent}$
    }
    return retro\_optimal
    \end{small}
    \caption{Retrospective Oracle for Tree Search}
    \label{alg:2}
\end{algorithm}

\begin{algorithm}[t]
\begin{small}
    \SetKwInOut{Input}{Input}
    \SetKwInOut{Output}{Output}
    {\bf Inputs:}\\
    $S_1$: initial problem size\\
    $S_2$: target problem size\\
    $\pi_{S_1}$:  trained on expert data of problem size $S_1$\\
    \For{$s\leftarrow S_1 + 1$ \KwTo $S_2$}
    { 
      generate problem instances $\{P_i^s\}$ of size $s$\\
      train $\pi_s$ via Alg. \ref{alg:1} by running $\pi_{s-1}$ on $\{P_i^s\}$ to generate initial search traces
    }
    \end{small}
    \caption{Retrospective Imitation for Scaling Up}
    \label{alg:3}
\end{algorithm}

\textbf{Core Algorithm for Fixed Problem Size.}  We assume access to an initial dataset of expert demonstrations to help bootstrap the learning process, as described in Line 3 in Algorithm \ref{alg:1} and depicted in step \raisebox{.5pt}{\textcircled{\raisebox{-0.9pt}1}} in Figure \ref{fig:example}. Learning proceeds iteratively.  In Lines 9-10, the current policy (potentially blended with an exploration policy) runs until a termination condition, such as reaching one or more terminal states, is met. 
In Figure \ref{fig:example}, this is step \raisebox{.5pt}{\textcircled{\raisebox{-0.9pt}2}} and the red node is a terminal state. In Line 11, a retrospective oracle computes the retrospective optimal trace for each terminal state (step \raisebox{.5pt}{\textcircled{\raisebox{-0.9pt}3}}). This is identified by the path with red nodes from the root to the terminal state. In Line 12, a new dataset is generated, as discussed below. In Lines 12-14, we imitate the retrospective optimal trace (in this case using DAgger) to obtain a much more efficient search policy. We then train a new policy and repeat the process.

\textbf{Retrospective Oracle.}  A retrospective oracle (with query access to the environment) takes as input a search trace $\tau$ and outputs a retrospective optimal trace $\pi^*(\tau, s)$ for each terminal state $s$. 
Note that optimality is measured with respect to $\tau$, and not globally. That  is, based on $\tau$, what is the fastest/shortest known action sequence to reach a terminal state if we were to solve the {\it same} instance again?  In Figure \ref{fig:example}, given the current trace with a terminal state $\star$   (step \raisebox{.5pt}{\textcircled{\raisebox{-0.9pt}2}}), the retrospective optimal trace is the path along red nodes (step \raisebox{.5pt}{\textcircled{\raisebox{-0.9pt}3}}). In general, $\pi^*(\tau, s)$ will be shorter than $\tau$, which implies faster search in the original problem.
 Algorithm \ref{alg:2} shows the retrospective oracle for tree-structured search.
 Identifying a retrospective optimal trace given a terminal state is equivalent to following parent pointers until the initial state, as this results in the shortest trace. 

\textbf{Design Decisions in Training Data Creation.} Algorithm \ref{alg:1} requires specifying how to create each new dataset $D_i$ given the search traces and the retrospective optimal ones (Line 12 of Algorithm \ref{alg:1}). 
Intuitively $D_i$ should show how to correct mistakes made during roll-out to reach a terminal state $s$. What constitutes a mistake is influenced by the policy's actions. 
For reduction-based imitation learning algorithms such as DAgger and SMILe, the learning reduction converts the retrospective optimal solution into per state-action level supervised learning labels. Two concrete examples are shown in Figure \ref{fig:zoom}. 

Furthermore, in the case that  $\tau$ contains multiple terminal states, we also need to decide which to prioritize. See Section \ref{sec:experiment} for concrete instantiations of these decisions for learning search heuristics for solving mazes and learning branch-and-bound heuristics for solving integer programs.

\textbf{Scaling Up.} The most significant benefit of retrospective imitation is the ability to scale up to problems of sizes beyond those in the initial dataset of expert demonstrations.
Algorithm \ref{alg:3} describes our overall framework, which iteratively learns to solve increasingly larger instances using Algorithm \ref{alg:1} as a subroutine.  
We show in the theoretical analysis that, under certain assumptions, retrospective imitation is guaranteed able to scale, or transfer, to increasingly larger problem instances. The basic intuition is that slightly larger problem instances are often ``similar enough'' to the current size problem instances, so that the current learned policy can be used as the initial expert when scaling up.

\textbf{Incorporating Exploration.}
In practice, it can be beneficial to employ some exploration. Exploration is typically more useful when scaling up to larger problem instances. We discuss some exploration approaches in Appendix~\ref{sec:explore}.

\section{Theoretical Results}
%
%
\label{analysis}
In this section, we provide theoretical insights on when we expect retrospective imitation to improve reduction based imitation learning algorithms, such as DAgger and SMILe.

For simplicity, we regard all terminal states as equally good, so we simply aim to find one as quickly as possible.  Note that our experiments evaluate settings beyond those covered in the theoretical analysis.  All proofs are in the appendix.

Our analysis builds on a \textbf{trace inclusion assumption}: \textit{the search trace $\tau_1$ generated by a trained policy contains the trace $\tau_2$ by an expert policy.} While somewhat strict, this assumption allows us to rigorously characterize retrospective imitation when scaling up. 
We measure the quality of a policy using the following error rate: 

\vspace{-0.2in}
\begin{small}
$$\epsilon = \frac{\#\text{Non-optimal actions compared to retrospective optimal trace}}{\#\text{Actions to reach a terminal state in retrospective optimal trace}}.$$
\end{small}

\vspace{-0.2in}
Intuitively, this metric  measures how often a policy fails to agree with the retrospective oracle. 
The following proposition states that retrospective imitation can effectively scale up and obtain a lower error rate. 

\begin{proposition}
\label{prop:error}
Let $\pi_{S_1}$ be a policy trained using  imitation learning on problem size $S_1$. If, during the scaling-up training process to problems of size $S_2 > S_1$, the trained policy search trace, starting from $\pi_{S_1}$, always contains the (hypothetical) expert search trace on problem of size $S_2$ (trace inclusion assumption), then the final error rate $\epsilon_{S_2}$ is at most that obtained by running imitation learning  (with expert demonstrations) directly on problems of size $S_2$.
\end{proposition}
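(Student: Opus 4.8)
The plan is to reduce the statement to a ``fixed-size'' comparison at size $S_2$ and then combine the reduction guarantee of the base imitation learner (DAgger, as used in Retrospective DAgger) with the structural consequences of the trace inclusion assumption. First I would unroll Algorithm~\ref{alg:3}: its last iteration runs Algorithm~\ref{alg:1} on size-$S_2$ instances with initial policy $\pi_{S_2-1}$, and by hypothesis every policy rolled out during that run (starting from $\pi_{S_2-1}$) produces a search trace on size-$S_2$ instances that contains the hypothetical size-$S_2$ expert trace $\tau_{exp}$. The only role of the earlier scaling steps $S_1{+}1,\dots,S_2{-}1$ is to hand off an initial policy that already satisfies trace inclusion at size $S_2$ --- which is exactly what the assumption grants --- so no comparison with direct imitation learning is needed at the intermediate sizes, and the whole argument localizes to size $S_2$.

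Next I would extract the structural facts that trace inclusion yields at size $S_2$. Since each roll-out trace $\tau$ satisfies $\tau \supseteq \tau_{exp}$, it reaches (at least) the expert's terminal state $s_{exp}$, so the retrospective oracle of Algorithm~\ref{alg:2} is well defined and may be taken to target $s_{exp}$ (if $\tau$ contains several terminal states, targeting $s_{exp}$ or a shallower one only helps, which disposes of the prioritization design decision). By Algorithm~\ref{alg:2}, the retrospective optimal trace $\rho$ is then the unique root-to-$s_{exp}$ path; since every ancestor of $s_{exp}$ must have been expanded before $s_{exp}$, we get $\rho \subseteq \tau_{exp}$ and $|\rho|\le|\tau_{exp}|$. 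Thus the target imitated by Retrospective DAgger at size $S_2$ is a backtracking-free sub-trace of the very expert trace that direct imitation learning would imitate: it is ``no longer and no harder.''

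Finally I would apply the DAgger reduction bound in both scenarios. For direct imitation learning on size $S_2$, the returned policy's error rate is governed (up to the usual $o(1)$ optimization-error term) by the best-in-class classification error of matching the expert's per-state actions; for Retrospective DAgger, the error rate $\epsilon_{S_2}$ of the returned policy $\pi_{S_2}$ is at most the best-in-class error of matching the retrospective labels induced by $\rho$. The step that closes the proof is that the latter best-in-class error is at most the former: along the solution path the retrospective labels coincide with the corresponding expert ``advance'' actions, while dropping the expert's dead-end and backtrack states from the target can only remove states on which a policy might err and can only shrink the denominator of $\epsilon$ in the favourable direction; hence any policy competitive for the expert-imitation problem is at least as competitive for the retrospective-imitation problem, giving $\epsilon_{S_2}\le\epsilon_{S_2}^{\text{direct}}$, with equality in the degenerate case where the expert trace already has no backtracking.

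I expect the main obstacle to be exactly this last step: the roll-out state distributions of the retrospectively trained policy and of the direct-imitation policy differ --- they traverse different sequences of partial search trees --- so the two best-in-class errors are a priori taken over different distributions and are not literally comparable. I would handle this by leaning on the realizability-style assumption already implicit in the cited DAgger/SMILe guarantees (under which the governing quantity is the best-in-class error and a prefix-closed, backtracking-free target contained in the expert trace is provably no harder to realize), and, if a sharper statement is wanted, by an explicit coupling that maps each relevant retrospective state to the expert state producing the same path action. A secondary, routine subtlety is bookkeeping the $o(1)$ terms so that ``at most'' holds in the limit of sufficiently many iterations and samples, precisely as in the analyses of the base algorithms.
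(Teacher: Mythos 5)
Your proposal is correct and follows essentially the same route as the paper's own proof, which is only three sentences long: trace inclusion guarantees the retrospective dataset contains feedback for every node of the expert trace, and that feedback (a backtracking-free path to a terminal state found in the roll-out) is the ``right'' training objective whereas the expert's own per-state actions are not, hence the error rate trained on the retrospective data is at most that of imitation learning. The distribution-mismatch obstacle you flag in your final paragraph is genuine, but the paper's proof does not address it either --- it asserts the comparison purely at the level of datasets and labels --- so your sketch is, if anything, more explicit than the original about what a fully rigorous argument would still need to justify.
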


Next we analyze how lower error rates impact the number of actions to reach a terminal state. 
We restrict ourselves to decision spaces of size 2: branch to one of its children or backtrack to its parent. 
Theorem \ref{expectation} equates the number of actions to hitting time for an asymmetric random walk. 

\begin{theorem}
\label{expectation}
Let $\pi$ be a trained policy that has an error rate of $\epsilon \in (0, \frac{1}{2})$ as measured against the retrospective feedback. Let $P$ be a search problem where the optimal action sequence has length $N$, and let $T$ be the number of actions by $\pi$ to reach a terminal state. Then the expected number of actions by $\pi$ to reach a terminal state is $\mathbb{E}[T] =  \frac{N}{1 - 2 \epsilon}$. Moreover, $\mathbb{P}[T \ge \alpha N] \in O(\exp(-\alpha + \mathbb{E}[T] / N))$ for any $\alpha \ge 0$.
\end{theorem}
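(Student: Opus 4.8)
The plan is to model the search process as a one-dimensional random walk on the nodes of the optimal path. Since the decision space has size $2$ (branch toward a child or backtrack to the parent), and since by the error-rate definition the policy disagrees with the retrospective optimal action with probability $\epsilon$ at each step, I would let $X_t \in \{+1, -1\}$ be the displacement at step $t$ along the retrospective optimal trace: $X_t = +1$ (a correct ``branch toward the terminal state'' move) with probability $1-\epsilon$ and $X_t = -1$ (a ``backtrack'' / mistake) with probability $\epsilon$. Then $S_t = \sum_{i\le t} X_i$ is an asymmetric random walk with positive drift $\mathbb{E}[X_t] = 1 - 2\epsilon > 0$ since $\epsilon < \tfrac12$, and $T$ is the first hitting time of level $N$, i.e. $T = \inf\{t : S_t = N\}$. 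Reaching a terminal state corresponds to advancing $N$ net steps along the optimal trace.

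\textbf{The expectation.} For the first moment I would invoke Wald's identity: since $T$ is a stopping time with $\mathbb{E}[T] < \infty$ (finiteness follows from positive drift, e.g. by a standard renewal/submartingale argument), we have $\mathbb{E}[S_T] = \mathbb{E}[T]\,\mathbb{E}[X_1]$. Because the walk moves in unit steps and starts at $0$, it hits $N$ exactly, so $S_T = N$, giving $N = \mathbb{E}[T](1-2\epsilon)$ and hence $\mathbb{E}[T] = \frac{N}{1-2\epsilon}$, as claimed. I would spell out the justification that $\mathbb{E}[T]<\infty$ carefully, since Wald's identity needs it — one clean route is to note $S_t - t(1-2\epsilon)$ is a martingale and apply optional stopping after a truncation argument, or to bound $T$ stochastically by a sum of i.i.d.\ geometric ``cycle lengths.''

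\textbf{The tail bound.} For $\mathbb{P}[T \ge \alpha N]$, the event $\{T \ge \alpha N\}$ (roughly) means the walk has not yet reached $N$ by time $\alpha N$, i.e.\ $S_{\alpha N} < N$, equivalently $S_{\alpha N}$ falls short of its mean $\alpha N (1-2\epsilon)$ by $\Theta(\alpha N)$ whenever $\alpha > \mathbb{E}[T]/N = 1/(1-2\epsilon)$. A standard Chernoff/Hoeffding bound on the i.i.d.\ bounded increments $X_i$ then gives an exponential-in-$\alpha N$ bound; absorbing the problem-dependent constants into the big-$O$, and writing the exponent in terms of the natural scale $\mathbb{E}[T]/N$, yields $\mathbb{P}[T \ge \alpha N] \in O\!\left(\exp(-\alpha + \mathbb{E}[T]/N)\right)$. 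I would present this via the moment generating function of $X_1$, namely $\mathbb{E}[e^{-\lambda X_1}] = (1-\epsilon)e^{-\lambda} + \epsilon e^{\lambda}$, and optimize $\lambda$, then simplify to the stated form.

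\textbf{Anticipated obstacle.} The routine parts (Wald, Chernoff) are mechanical; the main subtlety is making the reduction to the clean asymmetric random walk fully rigorous. In the actual search tree a ``mistake'' need not be a single backtrack step along the optimal path — it may lead into a subtree from which the policy must make several moves to return, and the per-step independence of $X_t$ is an idealization. I would therefore either (i) state explicitly the simplifying modeling assumptions (each deviation costs exactly one net step and steps are i.i.d.), consistent with the paper's already-strong trace-inclusion assumption and the size-2 decision-space restriction, or (ii) argue that a general excursion only makes $T$ larger in a controlled way and bound it by the i.i.d.\ model. Given the paper's framing (``we restrict ourselves to decision spaces of size 2''), route (i) is the intended one, so the proof is short once the correspondence is stated; the write-up obstacle is mainly justifying $\mathbb{E}[T] < \infty$ cleanly enough to license Wald's identity and tightening the constant-hiding in the big-$O$ for the tail.
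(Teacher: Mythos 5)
Your reduction is exactly the paper's: the search is modelled as an asymmetric $\pm 1$ walk with $+1$ w.p.\ $1-\epsilon$, $-1$ w.p.\ $\epsilon$, and $T$ is the hitting time of level $N$ (the paper states this as a standalone lemma, with the reduction illustrated by a search-tree figure and the same ``each mistake costs one net step'' idealization you flag). Where you diverge is in how the hitting-time lemma itself is proved. For the mean, you use Wald's identity (equivalently, optional stopping on $S_t - t(1-2\epsilon)$), whereas the paper decomposes $T_N \overset{\mathrm{dist}}{=} \sum_{i=1}^N T_1^{(i)}$, derives the probability generating function $\Phi(\lambda)=\mathbb{E}[\lambda^{T_1}]$ from the first-step recursion $\Phi(\lambda)=\lambda\left((1-\epsilon)+\epsilon\,\Phi(\lambda)^2\right)$, solves the quadratic, and differentiates at $\lambda = e^{\beta}$, $\beta=0$. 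Wald is shorter but, as you note, needs $\mathbb{E}[T]<\infty$ justified up front; the generating-function route delivers integrability and the tail bound from the same object. For the tail, you Chernoff-bound the partial sum $S_{\lfloor\alpha N\rfloor}$ using the MGF of the increments, while the paper applies Markov's inequality to $e^{\beta T_N}=\Phi(e^\beta)^N$ and sets $\beta=1/N$, letting $N\to\infty$ to land on the stated form $\exp\left(-\alpha+\tfrac{1}{1-2\epsilon}\right)$. Your version actually yields the stronger decay $\exp(-c(\alpha)N)$ for $\alpha>1/(1-2\epsilon)$, which implies the stated $O(\exp(-\alpha+\mathbb{E}[T]/N))$, so nothing is lost; you would just need to add the explicit optimization over $\lambda$ and the translation into the paper's normalization to match the claimed form. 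Both arguments are correct; yours is the more standard probabilistic toolkit, the paper's is self-contained and reuses $\Phi$ for both claims.
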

This result implies that lower error rates lead to shorter search time (in the original search problem) with exponentially high probability. By combining this result with the lower error rate of retrospective imitation (Proposition \ref{prop:error}), we see that retrospective imitation has a shorter expected search time than the corresponding imitation learning algorithm. We provide further analysis in the appendix.

\section{Experimental Results} 
\label{sec:experiment}
We empirically validate the generality of  our retrospective imitation technique by instantiating it with two well-known imitation learning algorithms, DAgger \citep{ross2010reduction} and SMILe \citep{ross2010efficient}. Appendix \ref{retro_smile} describes how to instantiate retrospective imitation with  SMILe instead of DAgger. We showcase the scaling up ability of retrospective imitation by only using demonstrations on the smallest problem size and scaling up to larger sizes in an entirely unsupervised fashion through Algorithm \ref{alg:3}. 
We experimented on both A* search and branch-and-bound search for integer programs.

\subsection{Environments and Datasets}
\label{sec:setup}

We experimented on three sets of tasks, as described below.
Branch-and-bound search for integer programs is particularly challenging and characterizes a common practical use case, so  we also include additional comparisons using datasets in \citet{he2014learning} in Appendix \ref{metric} for completeness. 

\textbf{Maze Solving with A* Search.} We generate random mazes according to the Kruskal's algorithm \citep{kruskal1956shortest}. 
For imitation learning, we use search traces provided by an A* search procedure equipped with the Manhattan distance heuristic as initial expert demonstrations. 


We experiment on mazes of 5 increasing sizes, from $11\times 11$ to $31 \times 31$. For each size, we use 48 randomly generated mazes for training, 2 for validation and 100 for testing. We perform A* search with Manhattan distance as the search heuristic to generate initial expert traces which are used to train imitation learning policies. The learning task is to learn a priority function to decide which locations to prioritize and show that it leads to more efficient maze solving. For our retrospective imitation policies, we only assume access to expert traces of maze size $11\times 11$ and learning on subsequent sizes is carried out according to Algorithm \ref{alg:3}. Running retrospective imitation resulted in generating $\sim 100$k individual data points.

\textbf{Integer Programming for Risk-aware Path Planning.}
We consider the risk-aware path planning problem from \citet{ono2013probabilistic}.
We briefly describe the setup, and defer a detailed description to Appendix~\ref{sec:MILP-formulation}.
Given a start point, a goal point, a set of polygonal obstacles, and an upper bound of the probability of failure (risk bound), the goal is to find a path, represented by a sequence of way points, that minimizes a cost while limiting the probability of collision to within the risk bound. This task can be formulated as a mixed integer linear program (MILP) \citep{Schouwenaars_MILP01,Prekopa_1999}, 
which is often solved using branch-and-bound \citep{land1960automatic}. Recently, data-driven approaches that learn branching and pruning decisions have been studied \citep{he2014learning, alvarez2014supervised, khalil2016learning}. Solving MILPs is in general NP-hard. 

We experiment on a set of $150$ instances of randomly generated obstacle maps with 10 obstacles each. We used a commercially available MILP solver Gurobi (Version $6.5.1$) to generate expert solutions. Details on dataset generation can be found in Appendix~\ref{dataset}. The risk bound was set to $\delta = 0.02$. We started from problems with 10 way points and scaled up to $14$ way points, in increments of $1$. The number of integer variables range from $400$ to $560$, which can be quite challenging to solve. For training, we assume that expert demonstrations by Gurobi are only available for the smallest problem size (10 way points, 400 binary variables). We use $50$ instances for each of training, validation and testing. Running retrospective imitation resulted in generating  $\sim 1.4$ million individual data points. 

\textbf{Integer Programming for Minimum Vertex Cover}. Minimum vertex cover (MVC) is a classical NP-hard combinatorial optimization problem, where the goal is to find the smallest subset of nodes in a given graph, such that every edge is adjacent to at least one node in this subset. This problem is quite challenging, and is difficult for commercial solvers even with large computational budgets. We generate random Erd\H{o}s-Renyi graphs \citep{erdos1960evolution} with varying number of nodes from 100 to 500. For each graph, its MVC problem is compiled into an integer linear program (ILP) and we use also the branch-and-bound search method to solve it. The number of integer variables range from 100 to 500. We use 15 labeled and 45 unlabeled graphs for training and test on 100 new graphs for each scale. Running retrospective imitation resulted in generating $\sim 350$k individual data points.

\subsection{Policy Learning}
\label{sec:policy}
For A* search, we learn a ranking model as the policy. The input features are mazes represented as a discrete-valued matrix indicating walls, passable squares, and the current location. 
We instantiate using neural networks with 2 convolutional layers with 32 $3\times 3$ filters each, $2\times 2$ max pooling, and a feed-forward layer with 64 hidden units.

For branch-and-bound search in integer programs, we considered two policy classes. The first follows \citep{he2014learning}, and consists of a node selection model (that prioritizes which node to consider next) and a pruning model (that rejects nodes from being branched on), which mirrors the structure of common branch-and-bound search heuristics. We use RankNet \citep{burges2005learning} as the selection model, instantiated using two layers with LeakyReLU \citep{maas2013rectifier} activation functions, and trained via cross entropy loss. For the pruning model, we train a 1-layer neural network classifier with higher cost on the optimal nodes compared to the negative nodes. We refer to this policy class as "select \& pruner". The other policy class only has the node selection model and is referred to as "select only".

The features used can be categorized into node-specific and tree-specific features. Node-specific features include an LP relaxation lower bound, objective value and node depth. Tree-specific features capture global aspects that include the integrality gap, number of solutions found, and global lower and upper bounds.  We normalize each feature to [-1,1] at each node, which is also known as query-based normalization \citep{Qin}.

\begin{figure*}[t]
                \centering
		\begin{subfigure}{0.3\textwidth}
        \centering
  		\includegraphics[width=\textwidth]{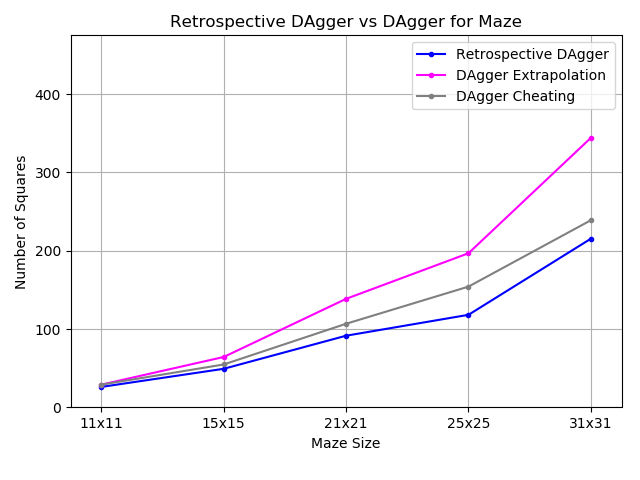}
        \vspace{-0.2in}\caption{}
  		\label{fig:dagger_squares}
        \end{subfigure} %
        ~
	    \begin{subfigure}{0.3\textwidth}
        \centering
        \includegraphics[width=\textwidth]{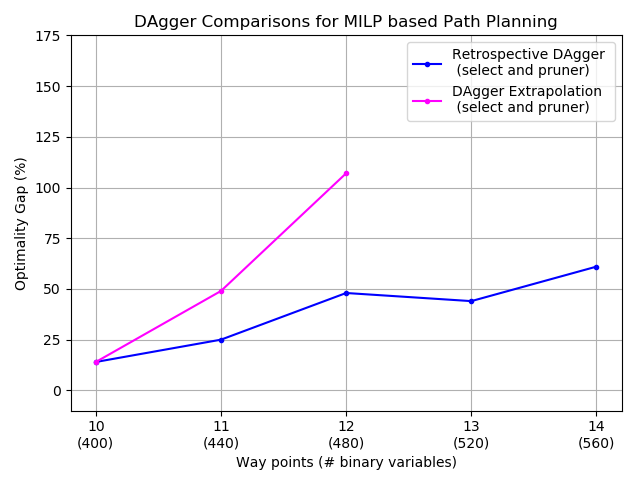}
        \vspace{-0.2in}\caption{}
        \label{fig:he_dagger_gap}
        \end{subfigure} %
        ~
		\begin{subfigure}{0.3\textwidth}
        \centering
        \includegraphics[width=\textwidth]{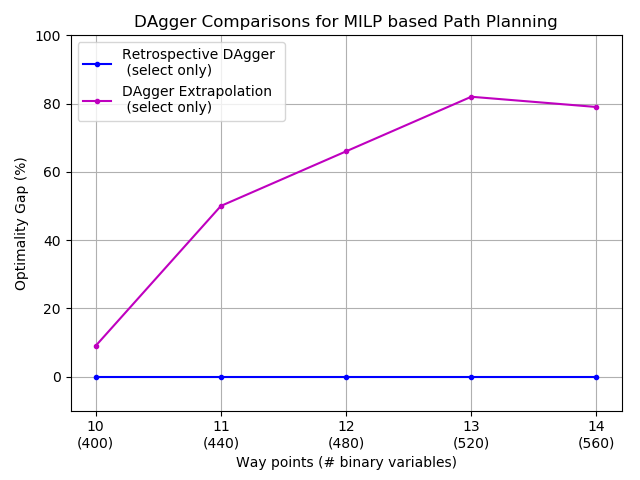}
        \vspace{-0.2in}\caption{}
        \label{fig:dagger_gap}
        \end{subfigure} %
        ~

        \begin{subfigure}{0.3\textwidth}
        \centering
  		\includegraphics[width=\textwidth]{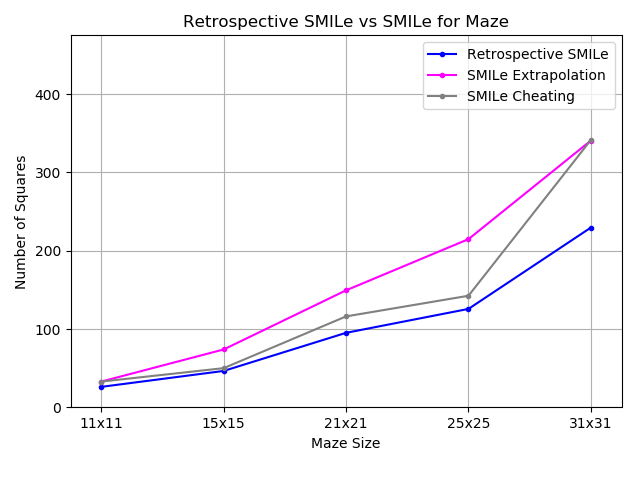}
  		\vspace{-0.2in}\caption{}
  		\label{fig:smile_squares}
        \end{subfigure} %
        ~
	    \begin{subfigure}{0.3\textwidth}
        \centering
        \includegraphics[width=\textwidth]{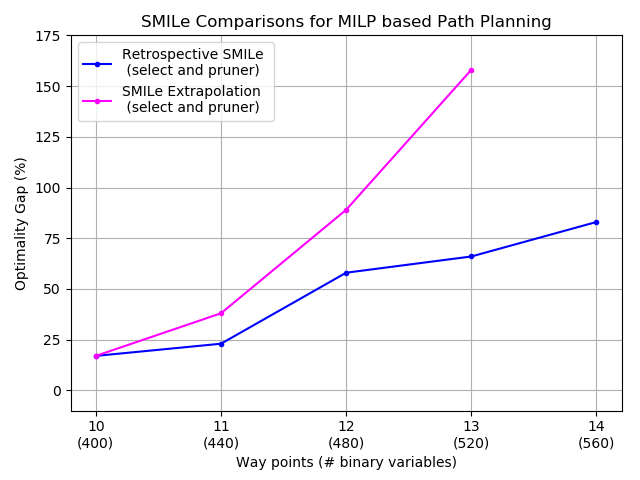}
        \vspace{-0.2in}\caption{}
        \label{fig:he_smile_gap}
        \end{subfigure} %
        ~
        \begin{subfigure}{0.3\textwidth}
        \centering
        \includegraphics[width=\textwidth]{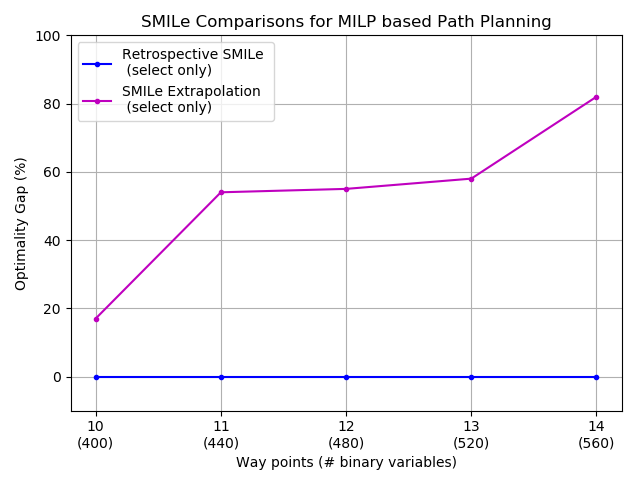}
        \vspace{-0.2in}\caption{}
        \label{fig:smile_gap}
        \end{subfigure} %
        \vspace{-0.1in}\caption{ Retrospective imitation versus DAgger (top) and SMILe (bottom) for maze solving (left) and risk-aware path planning (middle and right). ``Extrapolation'' is the conventional imitation learning baseline, and ``Cheating'' (left column only) gives imitation learning extra training data.  Retrospective imitation consistently and significantly outperforms imitation learning approaches in all settings.} 
        \label{fig:retro_vs_base}
\end{figure*}
\subsection{Main Results}
\label{sec:results}

\begin{figure}[t]
\centering
\begin{subfigure}{.14\textwidth}
  \centering
  \includegraphics[width = \textwidth]{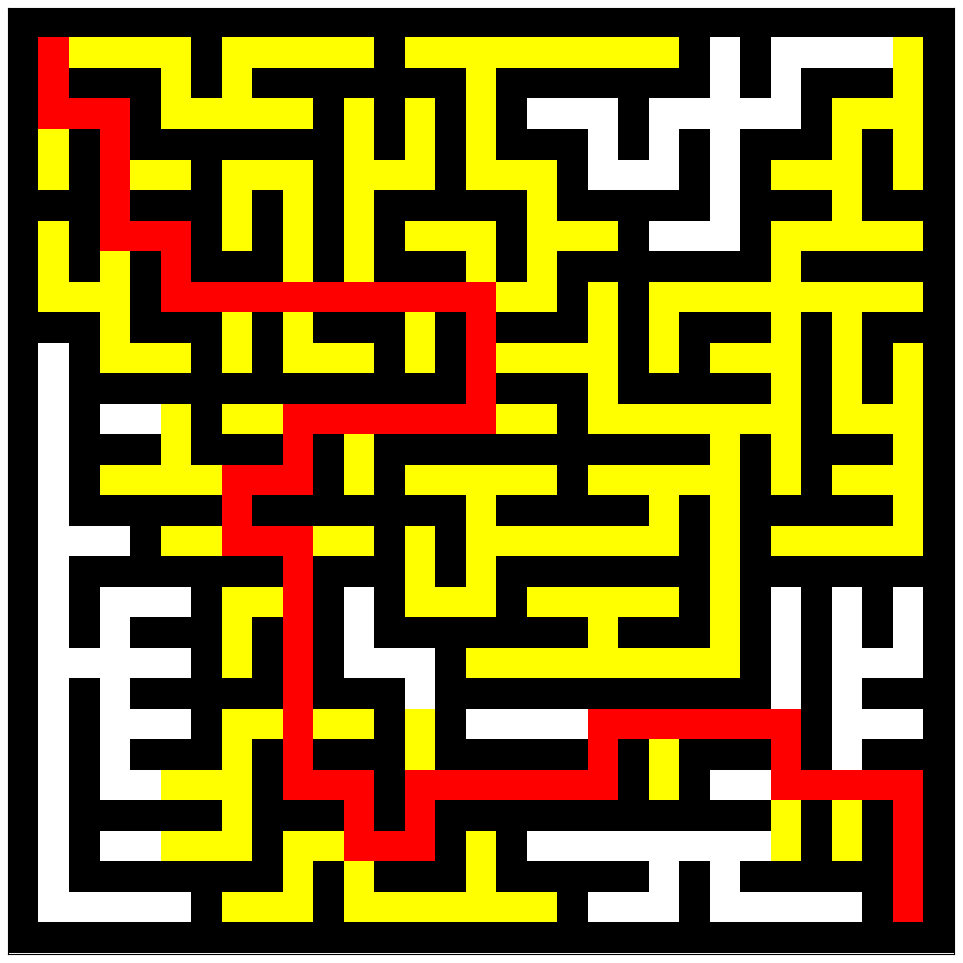}
\end{subfigure}
\  
\begin{subfigure}{.14\textwidth}
  \centering
  \includegraphics[width = \textwidth]{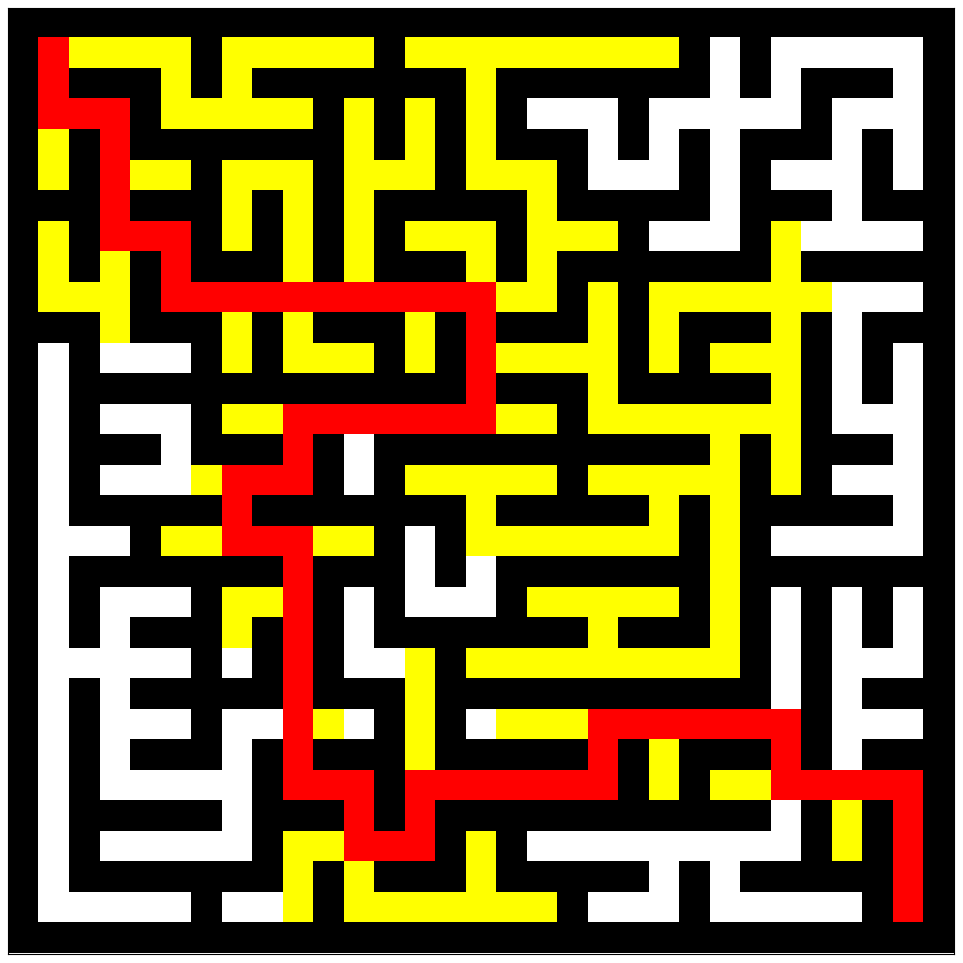}
\end{subfigure}
\ 
\begin{subfigure}{.14\textwidth}
  \centering
  \includegraphics[width = 1\textwidth]{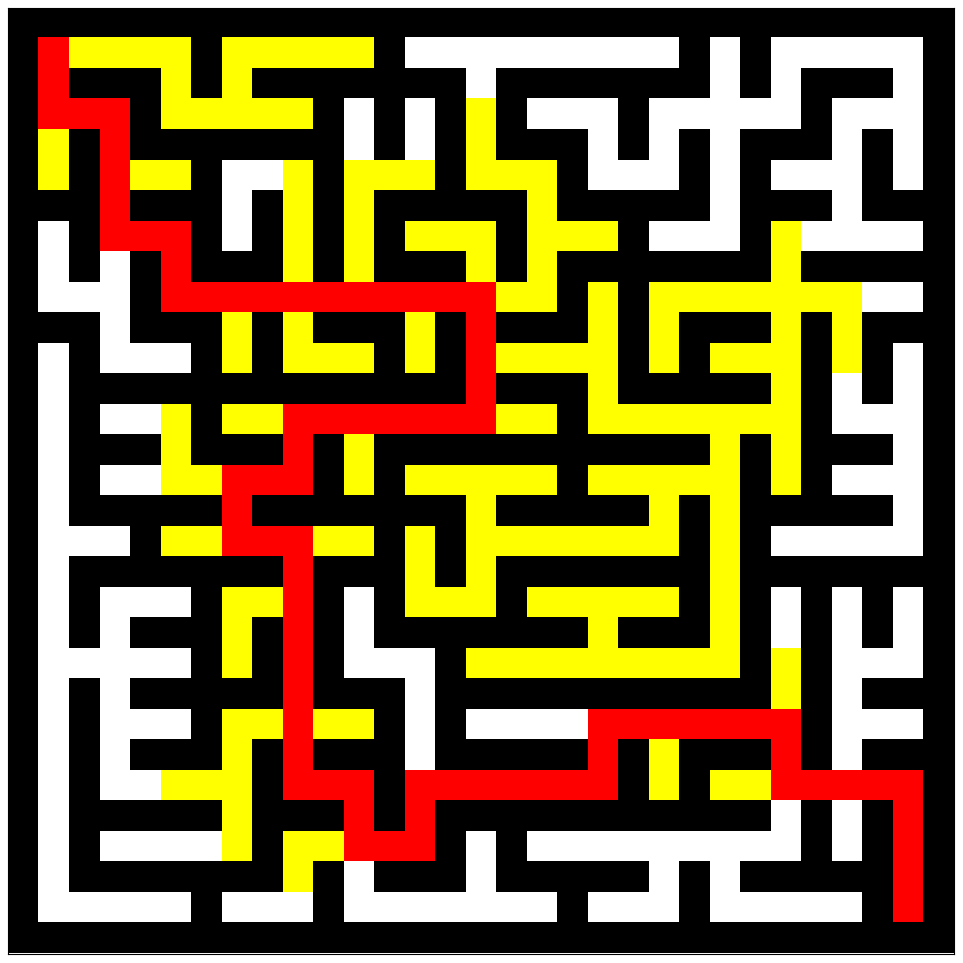}
\end{subfigure}
\vspace{-0.05in}
\caption{Left to right: comparing Manhattan distance heuristic, DAgger Cheating and Retrospective DAgger on a $31\times 31$ maze starting at upper left and ending at lower right. Yellow squares are explored. Optimal path is red. The three algorithms explore 333, 271 and 252 squares, respectively.}
\label{fig:mazes}
\end{figure}

\textbf{Comparing Retrospective Imitation with Imitation Learning.}
As retrospective learning is a general framework, we validate with two different baseline imitation learning algorithms, DAgger \citep{ross2010reduction} and SMILe \citep{ross2010efficient}. We consider two possible settings for each baseline imitation learning algorithm. The first is ``Extrapolation'', which is obtained by training an imitation model only using demonstrations on the smallest problem size and applying it directly to subsequent sizes without further learning. Extrapolation is the natural baseline to compare with retrospective imitation as both have access to the same  demonstration dataset. The second baseline setting is ``Cheating'', where we provide the baseline imitation learning algorithm with expert demonstrations on the target problem size, which is significantly more than provided to retrospective imitation.  Note that Cheating is not feasible in practice for settings of interest.  

Our main comparison results are shown in 
Figure \ref{fig:retro_vs_base}.  
For this comparison, we focus on maze solving and risk-aware path planning, as evaluating the true optimality gap for minimum vertex cover is intractable.
We see that retrospective imitation (blue) consistently and dramatically outperforms conventional Extrapolation imitation learning (magenta) in every setting. We see in Figure \ref{fig:dagger_squares}, \ref{fig:smile_squares} that retrospective imitation even outperforms Cheating imitation learning, despite having only expert demonstrations on the smallest problem size.  We also note that Retrospective DAgger consistently outperforms Retrospective SMILe. 

In the maze setting (Figure \ref{fig:dagger_squares}, \ref{fig:smile_squares}), the objective is to minimize the number of explored squares to reach the target location. 
Without further learning beyond the base size, Extrapolation degrades rapidly and the performance difference with retrospective imitation becomes very significant. Even compared with Cheating policies, retrospective imitation still achieves better objective values at every problem size, which demonstrates its transfer learning capability. Figure \ref{fig:mazes} depicts a visual comparison for an example maze. 
\begin{figure*}[t]
\centering
        \begin{subfigure}{0.3\textwidth}
        \centering
  		\includegraphics[width=\textwidth]{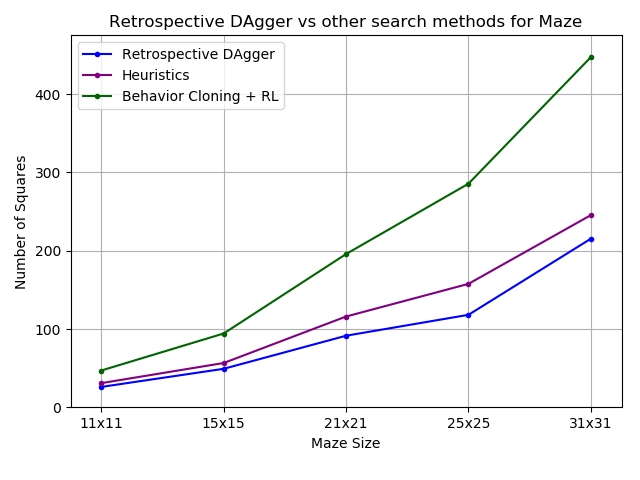}
  		\vspace{-0.25in}\caption{}
  		\label{fig:maze_benchmark}
        \end{subfigure} %
        ~
        \begin{subfigure}{0.3\textwidth}
        \centering
        \includegraphics[width=\textwidth]{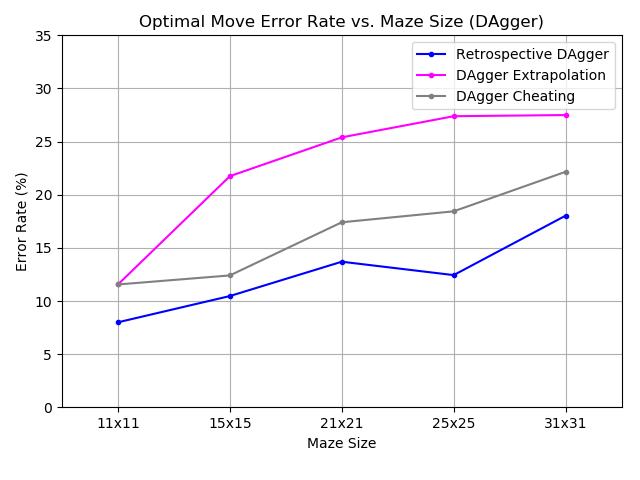}
        \vspace{-0.25in}\caption{}
        \label{fig:dagger_errors}
        \end{subfigure} %
        ~
        \begin{subfigure}{0.3\textwidth}
        \centering
        \includegraphics[width=\textwidth]{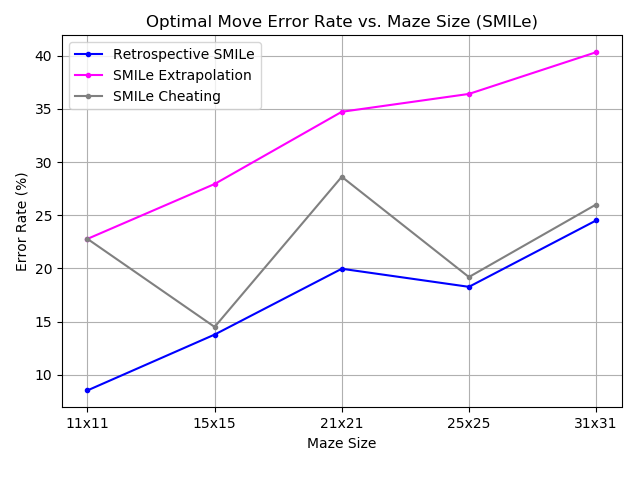}
        \vspace{-0.25in}\caption{}
        \label{fig:smile_errors}
        \end{subfigure} %
        \vspace{-.1in}\caption{(left) Retrospective imitation versus off-the-shelf methods. The RL baseline performs very poorly due to sparse environmental rewards. (middle, right) Single-step decision error rates, used for empirically validating theoretical claims.}
\end{figure*}
\begin{figure*}
\centering
	    \begin{subfigure}{0.3\textwidth}
        \centering
        \includegraphics[width=\textwidth]{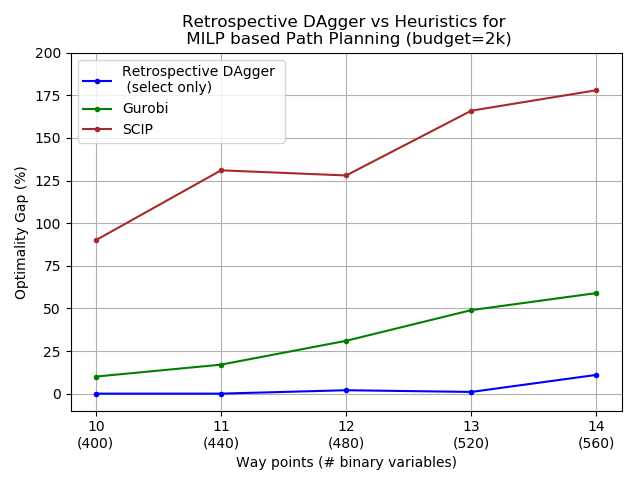}
        \vspace{-.2in}\caption{}
        \label{fig:psulu_benchmark_2k}
        \end{subfigure} %
        ~
        \begin{subfigure}{0.3\textwidth}
        \centering
        \includegraphics[width=\textwidth]{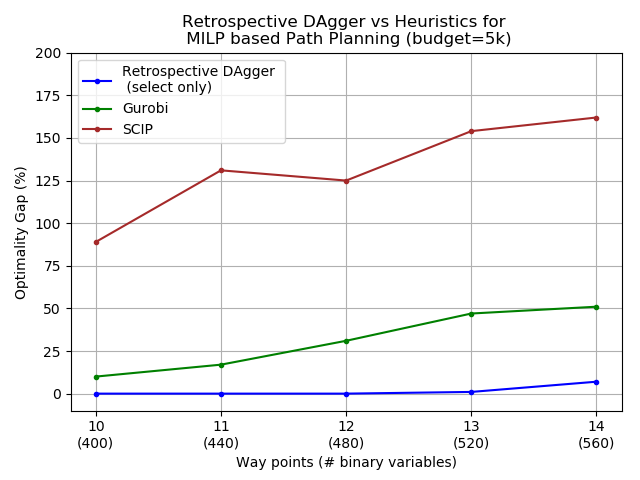}
        \vspace{-.2in}\caption{}
        \label{fig:psulu_benchmark_5k}
        \end{subfigure} %
        ~
        \begin{subfigure}{0.3\textwidth}
        \centering
        \includegraphics[width=\textwidth]{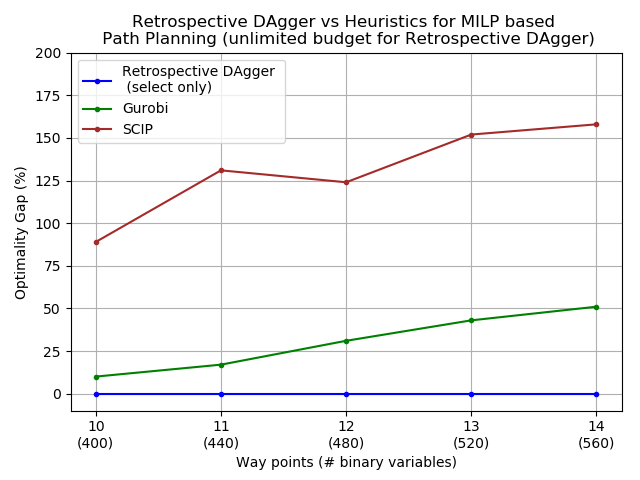}
        \vspace{-.2in}\caption{}
        \label{fig:psulu_benchmark_unlimited}
        \end{subfigure} 
        \vspace{-0.1in}
        \caption{Retrospective DAgger (``select only'' policy class) with off-the-shelf branch-and-bound solvers using various search node budgets. Retrospective DAgger consistently  outperforms baselines.}
        \label{fig:psulu}
\end{figure*}
\begin{figure}
\centering
\includegraphics[width=0.35\textwidth]{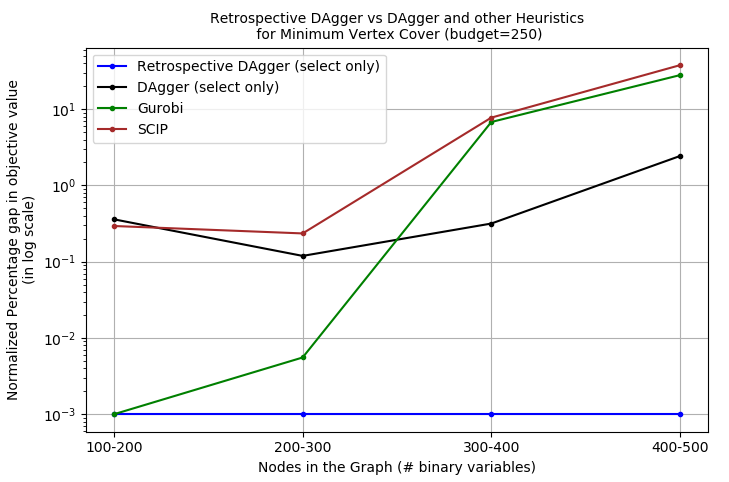}
\vspace{-.2in}
\caption{Relative objective value gaps of various methods compared with retrospective imitation when restricted with a search budget of 250 nodes. Retrospective imitation consistently outperforms other methods, especially at large scales.}
\label{fig:mvc}
\end{figure}

In the risk-aware path planning setting (Figure \ref{fig:he_dagger_gap}, \ref{fig:dagger_gap}, \ref{fig:he_smile_gap}, \ref{fig:smile_gap}), the objective is to find feasible solutions with low optimality gap, defined as the percentage difference between the best objective value found and the optimal (found via exhaustive search). If a policy fails to find a feasible solution we impose an optimality gap of 300\% to arrive at a single comparison metric. See Appendix \ref{psulu_extra} for statistics on how many problems are not solved. We compare the optimality gap of the algorithms at the same number of explored nodes. In Figure \ref{fig:he_dagger_gap}, \ref{fig:he_smile_gap} we first run the retrospective imitation version until termination, and then run the other algorithms to the same number of explored nodes. In Figure \ref{fig:dagger_gap}, \ref{fig:smile_gap}, we first run the retrospective imitation with the ``select only'' policy class until termination, and then run the other algorithms to the same number of explored nodes.  We note that the ``select only'' policy class (Figure \ref{fig:dagger_gap}, \ref{fig:smile_gap}) significantly outperforms the ``select and pruner'' policy class (Figure \ref{fig:he_dagger_gap}, \ref{fig:he_smile_gap}), which suggests that utilizing conceptually simpler policy classes may be more amenable to learning-based approaches in combinatorial search problems.

While scaling up, retrospective imitation obtains consistently low optimality gaps. In contrast, DAgger Extrapolation in Figure \ref{fig:he_dagger_gap} failed to find feasible solutions for $\sim 60\%$ test instances beyond 12 way points, so we did not test it beyond 12 way points. SMILe Extrapolation in Figure \ref{fig:he_smile_gap} failed for $\sim 75\%$ of the test instance beyond 13 way points. The fact that retrospective imitation continues to solve larger MILPs with a very slow optimality gap growth suggests that our approach is performing effective transfer learning. 

Minimum vertex cover is a challenging setting where it is infeasible to compute the optimal solution (even with large computational budgets).  We thus plot relative differences in objective with respect to retrospective imitation.  We see in Figure \ref{fig:mvc} that retrospective imitation consistently outperforms conventional imitation learning.


\textbf{Comparing Retrospective Imitation with Off-the-Shelf Approaches.} For maze solving, we compare with: 1) A* search with the Manhattan distance heuristic, and 2) behavioral cloning followed by reinforcement learning with a deep Q-network \citep{mnih2015human}. Figure \ref{fig:maze_benchmark} shows Retrospective DAgger outperforming both methods.  Due to the sparsity of the environmental rewards (only positive reward at terminal state), reinforcement learning performs significantly worse than even the Manhattan heuristic.

For risk-aware path planning and minimum vertex cover, we compare with a commercial solver Gurobi (Version 6.5.1) and SCIP (Version 4.0.1, using Gurobi as the LP solver). 
We implement our approach within the SCIP \citep{achterberg2009scip} integer programming framework.
Due to differences in  implementation, we use the number of explored nodes as a proxy for runtime. We control the search size for Retrospective DAgger (``select only'') and use its resulting search sizes to control Gurobi and SCIP.
Figures \ref{fig:psulu} \&  \ref{fig:mvc} show the results on a range of search size limits.  We see that Retrospective DAgger (``select only'') is able to consistently achieve the lowest optimality gaps, and the optimality gap grows very slowly as the number of integer variables scale far beyond the base problem scale.  As a point of comparison, the next closest solver, Gurobi, has an optimality gap $\sim 50 \%$ higher than Retrospective DAgger (``select only'') at 14 waypoints (560 binary variables) in the risk-aware path planning task and a performance gap of $\sim 40\%$ compared with Retrospective DAgger at the largest graph scale for minimum vertex cover.


\textbf{Empirically Validating Theoretical Results.}
Finally, we evaluate how well our theoretical results in Section \ref{analysis} characterizes our experimental results. Figure \ref{fig:dagger_errors} and \ref{fig:smile_errors} presents the optimal move error rates for the maze experiment, which validates Proposition \ref{prop:error} that retrospective imitation is guaranteed to result in a policy that has lower error rates than imitation learning. The benefit of having a lower error rate is explained by Theorem \ref{expectation}, which informally states that a lower error rate leads to shorter search time. This result is also verified by Figure \ref{fig:dagger_squares} and \ref{fig:smile_squares}, where Retrospective DAgger/SMILe, having the lowest error rates, explores the smallest number of squares at each problem scale.

\section{Conclusion \& Future Work}
\label{conclusion}
We have presented the retrospective imitation approach for learning combinatorial search policies. Our approach extends conventional imitation learning, by being able to learn good policies without requiring repeated queries to an expert.  
A key distinguishing feature of our approach is the ability to scale to larger problem instances than contained in the original supervised training set of demonstrations.
Our theoretical analysis shows that, under certain assumptions, the retrospective imitation learning scheme is provably more powerful and general than conventional imitation learning. 
We validated our theoretical results on a maze solving experiment and tested our approach on the problem of risk-aware path planning and minimum vertex cover, where we demonstrated both performance gains over conventional imitation learning and the ability to scale up to large problem instances not tractably solvable by commercial solvers.

By removing the need for repeated expert feedback, retrospective imitation offers the potential for increased applicability over imitation learning in search settings. However, human feedback is still a valuable asset as
human computation has been shown to boost performance of certain hard search problems \citep{le2014human}. It will be interesting to incorporate human computation into the retrospective imitation learning framework so that we can find a balance between manually instructing and autonomously reasoning to learn better search policies. 
Retrospective imitation lies in a point in the spectrum between imitation learning and reinforcement learning; we are interested in exploring other novel learning frameworks in this spectrum as well.


\newpage

\bibliography{references}
\bibliographystyle{icml2019}

\clearpage
\appendix

\newcommand{\E}[1]{
	{\mathbb{E}}\left[{#1}\right]
}

\renewcommand{\P}[1]{
	{\mathbb{P}}\left[{#1}\right]
}

\section*{Supplementary Material}
\appendix

\section{Retrospective Imitation with SMILe}
\label{retro_smile}
\begin{algorithm}[h]
\begin{small}
    \SetKwInOut{Input}{Input}
    \SetKwInOut{Output}{Output}
	\LinesNumbered
    {\bf Inputs:}\\
    $N$: number of iterations \\
    $\pi_1$: initially trained on expert traces\\
    $\alpha$: mixing parameter\\
    $\{P_j\}$: a set of problem instances\\

    \For{$i\leftarrow 1$ \KwTo $N$}
      {       
        run $\pi_i$ on $\{P_j\}$ to generate trace $\{\tau_j\}$\\
        compute $\pi^*(\tau_j, s)$ for each terminal state $s$ (Algorithm 2)\\
        collect new dataset $D$ based on $\pi^*(\tau_j, s)$\\
        train $\hat{\pi}_{i+1}$ on $D$\\
        $\pi_{i+1} = (1-\alpha)^i \pi_1 + \alpha\sum\limits_{j=1}^i (1-\alpha)^{j-1}\hat{\pi}_j$
      }
      {
        return $\pi_{N+1}$
      }
      \end{small}
    \caption{Retrospective SMILe}
    \label{alg:4}
\end{algorithm} 

\section{Additional Theoretical Results and Proofs}
\label{proofs}

First we prove Proposition \ref{prop:error}.
\begin{proof}
By the trace inclusion assumption, the dataset obtained by retrospective imitation will contain feedback for every node in the expert trace. Furthermore, the retrospective oracle feedback corresponds to the right training objective while the dataset collected by imitation learning does not, as explained in Section \ref{algo}. So the error rate trained on retrospective imitation learning data will be at most that of imitation learning.
\end{proof}
Our next theoretical result demonstrates that if larger problem instances have similar optimal solutions, a policy will not suffer a large increase in its error rate, i.e., we can ``transfer'' to larger sizes effectively. We consider the case where the problem size increase corresponds to a larger search space, i.e., the underlying problem formulation stays the same but an algorithm needs to search through a larger space. Intuitively, the following result shows that a solution from a smaller search space could already satisfy the quality constraint. Thus, a policy trained on a smaller scale can still produce satisfactory solutions to larger scale problems.
\begin{proposition}
\label{transition}
For a problem instance $P$, let $v_k^*$ denote the best objective value for $P$ when the search space has size $k$. Assume an algorithm returns a solution with objective value $v_k$, with  $v_k \ge \alpha v_k^*$ with $\alpha\in (0, 1)$. Then for any $\beta > 0$, there exists $K$ such that $v_{K} \ge \alpha v_{K+1}^* - \beta$.
\end{proposition}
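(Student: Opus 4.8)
The plan is to exploit the fact that the sequence $(v_k^*)_{k\ge 1}$ of best achievable objective values is monotone and bounded, hence convergent, and then to combine the near-optimality guarantee $v_k \ge \alpha v_k^*$ with the Cauchy property of that sequence.

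First I would observe that enlarging the search space can only (weakly) improve the best attainable objective, so $v_k^* \le v_{k+1}^*$ for all $k$; here I am implicitly using that the size-$k$ search spaces are nested (the intended reading of ``the underlying problem formulation stays the same but an algorithm needs to search through a larger space''), so every feasible solution available at size $k$ is still available at size $k+1$. Second, since $P$ is a fixed instance with a finite true optimum $v^{\star} := \sup_k v_k^*$, the sequence is bounded above. A monotone bounded real sequence converges, hence is Cauchy: for the given $\beta > 0$ there exists $K$ such that $v_{K+1}^* - v_K^* < \beta / \alpha$ (and since $\alpha \in (0,1)$ one could even settle for the weaker bound $v_{K+1}^* - v_K^* < \beta$).

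With that $K$ fixed, the conclusion follows from a one-line chain of inequalities:
\[
v_K \;\ge\; \alpha\, v_K^* \;=\; \alpha\, v_{K+1}^* - \alpha\,\bigl(v_{K+1}^* - v_K^*\bigr) \;>\; \alpha\, v_{K+1}^* - \beta,
\]
where the first inequality is the assumed near-optimality of the returned solution at size $K$, the equality is algebra, and the last step uses the Cauchy bound on $v_{K+1}^* - v_K^*$.

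The argument is short, so there is no real ``hard part'' beyond making the structural assumptions explicit; the one place to be careful is the monotonicity-and-boundedness claim, since that is precisely what encodes the informal statement that a solution from a smaller search space can already meet the quality target. If one prefers not to assume nestedness, an alternative is to assume directly that $v_k^* \to v^{\star}$ (which holds as soon as the search eventually becomes exhaustive), and the same Cauchy argument goes through verbatim. I would also flag the sign convention: the inequalities above are written for a maximization objective with $v_k^* \ge 0$, matching the stated form $v_k \ge \alpha v_k^*$ with $\alpha \in (0,1)$; for a minimization objective the monotonicity direction and the roles of the two sides flip accordingly.
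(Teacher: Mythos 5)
Your proof is correct and follows essentially the same route as the paper's: monotonicity plus boundedness of $(v_k^*)$ yields an index $K$ with $v_{K+1}^* - v_K^* \le \beta/\alpha$, and then the chain $v_K \ge \alpha v_K^* \ge \alpha v_{K+1}^* - \beta$ closes the argument. The only difference is that you make explicit the nestedness and sign-convention assumptions that the paper leaves implicit, which is a reasonable clarification rather than a departure.
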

\begin{proof}
Since $P$ has a finite optimal objective value $v^*$, and for any $k < k'$, $v_k^* \le v_{k'}^*$, then it follows that there exists an index $K$ such that $v_{K+1}^* - v_K^* \le \frac{\beta}{\alpha}$.\\
Then it follows that $v_K \ge \alpha v_K^*\ge \alpha(v_{K+1}^* - \frac{\beta}{\alpha}) = \alpha v_{K+1}^* - \beta$.
\end{proof}
Since the slack variable $\beta$ can be made arbitrarily small, Proposition \ref{transition} implies that solutions meeting the termination condition need not look very different when transitioning from a smaller search space to a larger one.  Finally, our next corollary justifies applying a learned policy to search through a larger search space while preserving performance quality, implying the ability to scale up retrospective imitation on larger problems so long as the earlier propositions are satisfied.
\begin{corollary}
Let $\epsilon_k$ be the error rate of an algorithm searching through a search space of size $k$. Then there exists $K$ such that $\epsilon_K = \epsilon_{K+1}$.
\end{corollary}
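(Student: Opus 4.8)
The plan is to deduce the corollary from Proposition~\ref{transition} together with the fact that, in a combinatorial search problem, attainable objective values form a discrete set. First I would observe that the sequence $(v_k^*)_k$ is non-decreasing (since $k < k'$ implies $v_k^* \le v_{k'}^*$) and bounded above by the finite global optimum $v^*$, hence convergent; because the objective takes values in a discrete set, the sequence is eventually constant, so there is an index $K$ with $v_K^* = v_{K+1}^* = v^*$. This is precisely the index extracted in the proof of Proposition~\ref{transition} once $\beta/\alpha$ is taken smaller than the least gap between distinct attainable objective values.

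Next I would argue that once $v_K^* = v_{K+1}^*$, the two quantities defining the error rate are unchanged between sizes $K$ and $K+1$. Recall that $\epsilon_k$ is measured against the retrospective optimal trace extracted from the algorithm's roll-out in the size-$k$ search space. By Proposition~\ref{transition}, a solution meeting the termination/quality condition already exists in the size-$K$ search space, so the algorithm reaches a terminal state — and the retrospective oracle (Algorithm~\ref{alg:2}) returns the same parent-pointer path — without ever needing the nodes added when passing from size $K$ to size $K+1$. Consequently the roll-out trace, the retrospective optimal trace, and hence both the numerator (number of non-optimal actions) and the denominator (length of the retrospective optimal trace) coincide for sizes $K$ and $K+1$, which gives $\epsilon_K = \epsilon_{K+1}$.

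The main obstacle I anticipate is making rigorous the claim that the roll-out up to the first terminal state is genuinely unaffected by enlarging the search space, since a priori the policy could be drawn into the newly added region first. I would handle this by leaning on the formulation used for Proposition~\ref{transition}, in which enlarging the search space only appends additional (deeper) candidates while leaving the already-reachable solutions — and the order in which the policy encounters them — intact; under that formulation the algorithm terminates at exactly the solution it would have found at size $K$. A cleaner alternative is to phrase the conclusion in the ``eventually'' sense already implicit in the preceding results: $\epsilon_k$ is a function of the (eventually constant) attainable solution set, so $(\epsilon_k)_k$ is itself eventually constant, which in particular produces some $K$ with $\epsilon_K = \epsilon_{K+1}$.
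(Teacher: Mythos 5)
The paper itself supplies no proof of this corollary: it is asserted immediately after Proposition~\ref{transition} as an evident consequence of the fact that the optimal objective values stabilize as the search space grows, so there is no written argument to compare yours against. Your first step is a reasonable (indeed slightly stronger) reconstruction of the intended reasoning: monotonicity and boundedness of $(v_k^*)_k$ plus discreteness of attainable objective values gives an index $K$ with $v_K^* = v_{K+1}^*$ exactly, rather than merely within $\beta/\alpha$ as in the proof of Proposition~\ref{transition}. Note, though, that discreteness of the objective values is an assumption you are adding; the paper never states it, and without it Proposition~\ref{transition} only yields approximate equality, which cannot deliver the exact equality $\epsilon_K = \epsilon_{K+1}$ claimed.

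The genuine gap is the one you flag yourself and then do not close: passing from $v_K^* = v_{K+1}^*$ to $\epsilon_K = \epsilon_{K+1}$. The error rate is defined as the number of disagreements with the retrospective optimal trace \emph{along the policy's roll-out}, divided by the length of that retrospective optimal trace. Equality of the optimal objective values controls the denominator (the same terminal state, hence the same parent-pointer path from Algorithm~\ref{alg:2}), but the numerator depends on the entire roll-out, and enlarging the search space strictly increases the set of decision points at which the policy can deviate. Your resolution --- that the enlargement ``only appends additional (deeper) candidates while leaving the order in which the policy encounters them intact'' --- is precisely the assertion that needs proof, and it is false in general for the priority-queue search model used elsewhere in the paper: newly added nodes can be ranked above the optimal node and change both the order of expansion and the count of non-optimal actions. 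The ``cleaner alternative'' does not rescue this, because $\epsilon_k$ is not a function of the attainable solution set alone but of the policy's behavior on the size-$k$ search space. To make the corollary rigorous one would have to either restrict to policies whose roll-out on the size-$(K+1)$ space provably never enters the appended region, or restate the conclusion as an approximate or asymptotic one; as written, both your argument and the paper's implicit one stop short of that.
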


To prove Theorem \ref{expectation} we need the following lemma on asymmetric 1-dimensional random walks.
\begin{lemma*}
Let $Z_i, i = 1, 2, \cdots$ be i.i.d.\ Bernoulli random variables with the distribution 
\[Z_i = \begin{cases}
1, \text{ with probability $1-\epsilon$}\\
-1, \text{ with probability $\epsilon$}
\end{cases}\]
for some $\epsilon\in [0, \frac{1}{2})$. Define $X_n = \sum\limits_{i=1}^n Z_i$ and the hitting time $T_N = \inf\{n: X_n = N\}$ for some fixed integer $N \ge 0$. Then $\E{T_N} = \frac{N}{1 - 2 \epsilon}$ and $\P{T_N \ge \alpha N} \in O(\exp(-\alpha + \E{T_N} / N))$.
\end{lemma*}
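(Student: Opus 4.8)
The plan is to reduce the hitting time $T_N$ to a sum of $N$ independent copies of $T_1$, the first-passage time to $+1$, compute the probability generating function of $T_1$ by a one-step decomposition, and then read off both the expectation and a Chernoff-type tail bound. Since each step $Z_i$ moves $X_n$ by exactly $\pm 1$, to reach level $N$ the walk must pass through $1, 2, \dots, N-1$ in order; by spatial homogeneity and the strong Markov property the successive inter-level passage times are i.i.d.\ with the law of $T_1$, so $T_N \overset{d}{=} \sum_{k=1}^{N} T_1^{(k)}$ and hence $\mathbb{E}[e^{\beta T_N}] = \phi(\beta)^N$, where $\phi(\beta) := \mathbb{E}[e^{\beta T_1}]$.

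Next I would compute $\phi$ by conditioning on the first step: with probability $1-\epsilon$ the walk hits $+1$ immediately, so $T_1 = 1$; with probability $\epsilon$ it moves to $-1$ and must then climb from $-1$ to $0$ and from $0$ to $+1$, which by the same decomposition costs $1 + T_1' + T_1''$ for independent copies $T_1', T_1''$. This gives the functional equation
\begin{equation*}
\phi(\beta) = e^{\beta}\bigl((1-\epsilon) + \epsilon\,\phi(\beta)^2\bigr),
\end{equation*}
a quadratic in $\phi(\beta)$ whose root with $\phi(0) = 1$ (the other root equals $\tfrac{1-\epsilon}{\epsilon} \neq 1$ at $\beta = 0$) is
\begin{equation*}
\phi(\beta) = \frac{1 - \sqrt{1 - 4\epsilon(1-\epsilon)e^{2\beta}}}{2\epsilon e^{\beta}}.
\end{equation*}
The discriminant is nonnegative, so $\phi$ is finite, exactly when $4\epsilon(1-\epsilon)e^{2\beta} \le 1$; since $\epsilon < \tfrac12$ forces $4\epsilon(1-\epsilon) < 1$ strictly, this holds on an open neighborhood of $\beta = 0$. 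Differentiating the functional equation at $\beta = 0$ (using $\phi(0)=1$) gives $\phi'(0) = 1 + 2\epsilon\,\phi'(0)$, i.e.\ $\mathbb{E}[T_1] = \phi'(0) = \tfrac{1}{1-2\epsilon}$, and therefore $\mathbb{E}[T_N] = N\,\phi'(0) = \tfrac{N}{1-2\epsilon}$. (Once $\mathbb{E}[T_N] < \infty$ is known, Wald's identity applied to $X_{T_N} = N$ recovers this directly, as $\mathbb{E}[Z_1] = 1-2\epsilon$.)

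For the tail bound I would run a Chernoff argument at the scale $\beta = 1/N$. By Markov's inequality applied to $e^{\beta T_N}$, for any $\alpha \ge 0$ and any $\beta \ge 0$ with $\phi(\beta) < \infty$,
\begin{equation*}
\mathbb{P}[T_N \ge \alpha N] \le e^{-\alpha\beta N}\,\mathbb{E}[e^{\beta T_N}] = \bigl(e^{-\alpha\beta}\phi(\beta)\bigr)^{N}.
\end{equation*}
Choosing $\beta = 1/N$ (valid for all large $N$, by the neighborhood-of-$0$ remark) makes the exponential prefactor $e^{-\alpha\beta N} = e^{-\alpha}$, and since $\phi$ is smooth at $0$ with $\phi(0) = 1$ and $\phi'(0) = \mathbb{E}[T_N]/N$, we have $\phi(1/N)^N = (1 + \phi'(0)/N + o(1/N))^N \to \exp(\phi'(0)) = \exp(\mathbb{E}[T_N]/N)$ as $N \to \infty$. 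Hence $\mathbb{P}[T_N \ge \alpha N] \le e^{-\alpha}\,\phi(1/N)^N \in O\!\bigl(\exp(-\alpha + \mathbb{E}[T_N]/N)\bigr)$, as claimed.

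\textbf{Main obstacle.} The substantive point is not the algebra but the analytic bookkeeping around the generating function: selecting the correct root of the quadratic, and verifying that $\phi$ is finite on a neighborhood of $0$ so that differentiating at $0$ and setting $\beta = 1/N$ are both legitimate. That finiteness — light tails for $T_N$, not merely positive drift — is precisely where the hypothesis $\epsilon < \tfrac12$ does real work. The remaining ingredients (the decomposition $T_N \overset{d}{=} \sum_k T_1^{(k)}$, the one-step equation, Wald, Markov) are routine.
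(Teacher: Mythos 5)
Your proof is correct and follows essentially the same route as the paper's: decompose $T_N$ into $N$ i.i.d.\ copies of $T_1$, derive the generating function by first-step analysis, solve the quadratic, differentiate at $0$ for the mean, and apply Markov's inequality with $\beta = 1/N$ for the tail. The only differences are cosmetic — you select the root via $\phi(0)=1$ rather than $\Phi(0)=0$, differentiate the functional equation instead of the closed form, and add a welcome check that $\phi$ is finite near $0$ (and your explicit formula, with denominator $2\epsilon e^{\beta}$, corrects what appears to be a typo $2\epsilon^2\lambda$ in the paper's displayed solution).
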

\begin{proof}
The proof will proceed as follows: we will begin by computing the moment-generating function (MGF) for $T_1$ and then use it to compute the MGF of $T_N$. Then, we will use this MGF to produce a Chernoff-style bound on $\P{T_N > \alpha N}$.

The key observation for computing the MGF of $T_N$ is that $T_N \overset{\mathrm{dist}}{=} \sum_{i = 1}^{N} T_1^{(i)}$, where $T_1^{(1)}, \ldots, T_1^{(N)} \overset{\mathrm{iid}}{\sim} \P{T_1}$. This is because the random walk moves by at most one position at any given step, independent of its overall history. Therefore the time it takes the walk move $N$ steps to the right is exactly the time it takes for the walk to move 1 step to the right $N$ times. So we have $\E{e^{\beta T_N}} = \E{e^{\beta T_1}}^N$ by independence.

With this in mind, let $\Phi(\lambda) = \E{\lambda^{T_1}}$ be the generating function of $T_1$. Then, by the law of total expectation and the facts above,
\begin{align*}
    \Phi(\lambda) &= \P{Z_1 = 1} \E{\lambda^{T_1} \mid Z_1 = 1} \\
    & \quad + \P{Z_1 = -1} \E{\lambda^{T_1} \mid Z_1 = -1} \\
    &= (1 - \epsilon) \E{\lambda^{1 + T_0}} + \epsilon \E{\lambda^{1 + T_2}} \\
    &= \lambda \left( (1 - \epsilon) \E{\lambda^{1 + T_0}} + \epsilon \E{\lambda^{1 + T_1^{(1)} + T_1^{(2)}}} \right) \\
    &= \lambda \left( (1 - \epsilon) + \epsilon \Phi(\lambda)^2 \right)
\end{align*}
Solving this quadratic equation in $\Phi(\lambda)$ and taking the solution that gives $\Phi(0) = \E{0^{T_1}} = 0$, we get
\begin{align*}
    \Phi(\lambda) &= \frac{1}{2 \epsilon^2 \lambda} \left(1 - \sqrt{1 - 4 \epsilon (1 - \epsilon) \lambda^2} \right)
\end{align*}
Now, we note that the MGF of $T_1$ is just $\Phi(e^\beta) = \E{e^{\beta T_1}}$. So $\E{e^{\beta T_N}} = \Phi(e^{\beta})^N$. To prove the first claim, we can just differentiate $\E{e^{\beta T_N}}$ in $\beta$ and evaluate it at $\beta = 0$, which tells us that $\E{T_N} = \frac{N}{1 - 2\epsilon}$.

To prove the second claim, we can apply Markov's inequality to conclude that for any $\alpha, \beta \ge 0$,
\begin{align*}
\P{T_N \ge \alpha N} &= \P{e^{\beta T_N} \ge e^{\beta \alpha N}} \\
&\le \E{e^{\beta T_N} e^{-\beta \alpha N}} \\
&= \left( \E{e^{\beta T_1}} e^{-\beta \alpha} \right)^N
\end{align*}
Letting $\beta = \frac{1}{N}$ and taking the limit as $N \to \infty$, we get that
\begin{align*}
    \lim\limits_{N \to \infty} \P{T_N \ge \alpha N} \le \exp\left(-\alpha + \frac{1}{1 - 2 \epsilon} \right)
\end{align*}
which implies the concentration bound asymptotically.
\end{proof}

\begin{figure}[H]
\centering
\begin{tikzpicture}[->,>=stealth',level/.style={sibling distance = 5cm/#1,
  level distance = 1.5cm}] 
\node [arn_n] {1}
    child{ node [arn_r] {2} 
            child{ node [arn_r] {3} edge from parent node [above left] {$z_2 = 0$}}
            child{ node [arn_r] {4} 
            	child {node [arn_r] {5} edge from parent node [left] {$z_3 = 0$}}
            	edge from parent node [above right] {$z_2 = 1$}}  
            edge from parent node [above left] {$z_1 = 0$}
    }
    child{ node [arn_n] {6}
            child{ node [arn_r] {7} 
                            edge from parent node [above left] {$z_2 = 0$}
            }
            child{ node [arn_n] {8}
							child{ node [arn_d] {9}
                            edge from parent node [left] {$z_3 = 0$}
                            }
            edge from parent node [above right] {$z_2 = 1$}
            }
            edge from parent node [above right] {$z_1 = 1$}
		}
; 
\end{tikzpicture}
\caption{An example search trace by a policy. The solid black nodes ($1\rightarrow 6\rightarrow 8\rightarrow 9$) make up the best trace to a terminal state in retrospect. The empty red nodes are the mistakes made during this search procedure. Every mistake increases the distance to the target node (node 9) by 1 unit, while every correct decision decreases the distance by 1 unit.}
\label{search_example}
\end{figure}
Now onto the proof for the Theorem \ref{expectation}. 
\begin{proof}
We consider the search problem as a 1-dimensional random walk (see Figure \ref{search_example}). The random walk starts at the origin and proceeds in an episodic manner. The goal is to reach the point $N$ and at each time step, a wrong decision is equivalent to moving 1 unit to the left whereas a right decision is equivalent to moving 1 unit to the right. The error rate of the policy determines the probabilities of moving left and right. Thus the search problem can be reduced to 1-dimensional random walk, so we can invoke the previous lemma and assert (1) that the expected number of time steps before reaching a feasible solution is $\frac{N}{1-2\epsilon}$, and (2) that the probability that this number of time steps is greater than $\alpha N$ is $O\left( \exp\left( -\alpha + \frac{1}{1 - 2 \epsilon} \right) \right)$.
\end{proof}

This theorem allows us to measure the impact of error rates on the expected number of actions.

\begin{corollary}
With two policies $\pi_1$ and $\pi_2$ with corresponding error rates $0 < \epsilon_1 < \epsilon_2 < \frac{1}{2}$, $\pi_2$ takes $\frac{1-2\epsilon_1}{1-2\epsilon_2}$ times more actions to reach a feasible state in expectation. Moreover, the probability that $\pi_1$ terminates in $\alpha N$ time steps (for any $\alpha \ge 0$) is $\exp\left( \frac{1}{1 - 2\epsilon_2} - \frac{1}{1 - 2 \epsilon_1} \right)$ times higher.
\end{corollary}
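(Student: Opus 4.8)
The plan is to derive both claims directly from Theorem~\ref{expectation}, applied separately to $\pi_1$ and to $\pi_2$ on the \emph{same} search problem $P$, so that the optimal-path length $N$ is common to both and cancels in all ratios. For the expectation statement I would first write $\mathbb{E}[T_1] = \frac{N}{1 - 2\epsilon_1}$ and $\mathbb{E}[T_2] = \frac{N}{1 - 2\epsilon_2}$ using the first part of Theorem~\ref{expectation}, then take the ratio $\mathbb{E}[T_2]/\mathbb{E}[T_1] = \frac{1 - 2\epsilon_1}{1 - 2\epsilon_2}$. Since $0 < \epsilon_1 < \epsilon_2 < \tfrac{1}{2}$, both denominators are positive and $1 - 2\epsilon_1 > 1 - 2\epsilon_2$, so this factor exceeds $1$; that is, $\pi_2$ genuinely requires strictly more actions in expectation by exactly this factor.

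For the tail statement I would invoke the second part of Theorem~\ref{expectation}, namely $\mathbb{P}[T_i \ge \alpha N] \in O\big(\exp(-\alpha + \mathbb{E}[T_i]/N)\big) = O\big(\exp(-\alpha + \tfrac{1}{1 - 2\epsilon_i})\big)$ for $i = 1, 2$, where the hidden constant is the same for both because the two underlying random walks differ only in the value of $\epsilon$. Forming the ratio of these two bounds, the common $e^{-\alpha}$ factor and the $O(\cdot)$ constant cancel, leaving $\exp\big(\tfrac{1}{1 - 2\epsilon_2} - \tfrac{1}{1 - 2\epsilon_1}\big)$, which is the stated factor; since $\epsilon_1 < \epsilon_2$ we have $\tfrac{1}{1 - 2\epsilon_2} > \tfrac{1}{1 - 2\epsilon_1}$, so this factor is $> 1$, matching the intuition that the lower-error policy is the one more likely to have already terminated. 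I would phrase ``$\pi_1$ terminates within $\alpha N$ steps with this factor higher probability'' as a comparison of the complementary (non-termination) tail probabilities $\mathbb{P}[T_i \ge \alpha N]$ controlled by Theorem~\ref{expectation}, rather than trying to bound $\mathbb{P}[T_i \le \alpha N]$ directly.

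The computations here are entirely routine, so I do not expect a substantive obstacle; the only thing requiring care is bookkeeping around the asymptotic/$O(\cdot)$ nature of the concentration bound. One must note that the implied constant is independent of $\epsilon$ — it arises from the same Chernoff argument (Markov's inequality with $\beta = 1/N$, letting $N \to \infty$) used in the random-walk lemma behind Theorem~\ref{expectation} — so that dividing the two bounds is legitimate and the constants really do cancel. A secondary point worth stating cleanly is the interpretation of ``terminates in $\alpha N$ time steps'' in terms of the tail events $\{T_i \ge \alpha N\}$; once that framing is fixed, the corollary follows in two lines from Theorem~\ref{expectation}.
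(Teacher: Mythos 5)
Your derivation is correct and is exactly the intended argument: the paper states this corollary as an immediate consequence of Theorem \ref{expectation} (it provides no separate proof), and your ratio computations $\mathbb{E}[T_2]/\mathbb{E}[T_1] = \frac{1-2\epsilon_1}{1-2\epsilon_2}$ and $\exp\left(\frac{1}{1-2\epsilon_2}-\frac{1}{1-2\epsilon_1}\right)$ for the Chernoff bounds are precisely what that consequence amounts to. Your observations that the second claim must be read as a comparison of the non-termination tails $\mathbb{P}[T_i \ge \alpha N]$ and that the implied constant in the $O(\cdot)$ is common to both walks are the right points of care, and they match the (informal) level of rigor at which the corollary itself is stated.
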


\section{MILP formulation of risk-aware path planning}\label{sec:MILP-formulation}
This section describes the MILP formulation of risk-aware path planning solved in Section \ref{sec:experiment}. 
Our formulation is based on the MILP-based path planning originally presented by \citep{Schouwenaars_MILP01}, combined with risk-bounded constrained tightening \citep{Prekopa_1999}. 
It is a similar formulation as that of the state-of-the-art risk-aware path planner pSulu \citep{ono2013probabilistic} but without risk allocation. 

We consider a path planning problem in $\mathbb{R^n}$, where a path is represented as a sequence of $N$ way points $x_1, \cdots x_N \in X$. 
The vehicle is governed by a linear dynamics given by:
\begin{align*}
x_{k+1} & = Ax_k + Bu_k + w_k \\
u_k & \in U,
\end{align*}
where $U \subset \mathbb{R}^m$ is a control space, $u_k \in U$ is a control input, $w_k \in \mathbb{R}^n$ is a zero-mean Gaussian-distributed disturbance, and $A$ and $B$ are $n$-by-$n$ and $n$-by-$m$ matices, respectively. Note that the dynamic of the mean and covariance of $x_i$, denoted by $\bar{x}_i$ and $\Sigma_i$, respectively, have a deterministic dynamics:
\begin{align}
\bar{x}_{k+1} &= A\bar{x}_k + Bu_k + w_k  \label{eq:const1} \\
\Sigma_{k+1} &= A \Sigma A^T + W, \nonumber
\end{align}
where $W$ is the covariance of $w_k$. We assume there are $M$ polygonal obstacles in the state space, hence the following linear constraints must be satisfied in order to be safe (as in Figure \ref{fig:obtacle}):
\[
\bigwedge_{k=1}^N \bigwedge_{i=1}^M \bigvee_{j=1}^{L_i} h_{ij} x_k \le g_{ij},
\]
where $\bigwedge$ is conjunction (i.e., AND), $\bigvee$ is disjunction (i.e., OR), $L_i$ is the number of edges of the $i$-th obstacle, and $h_{ij}$ and $g_{ij}$ are constant vector and scaler, respectively. In order for each of the linear constraints to be  satisfied with the probability of $1-\delta_{kij}$, the following has to be satisfied:
\begin{align}
& \bigwedge_{k=1}^N \bigwedge_{i=1}^M \bigvee_{j=1}^{L_i} h_{ij} \bar{x}_k \le g_{ij} - \Phi(\delta_{kij}) \label{eq:const2} \\
& \Phi(\delta_{kij}) = -\sqrt{2 h_{ijk} \Sigma_{x,k} h_{ijk}^T} \ {\rm erf}^{-1}(2\delta_{ijk}-1), \nonumber
\end{align}
where ${\rm erf}^{-1}$ is the inverse error function. 


\begin{figure}
    \begin{minipage}[b]{.45\textwidth}
  		\hbox{\hspace{2ex}\includegraphics[width=0.8\textwidth]{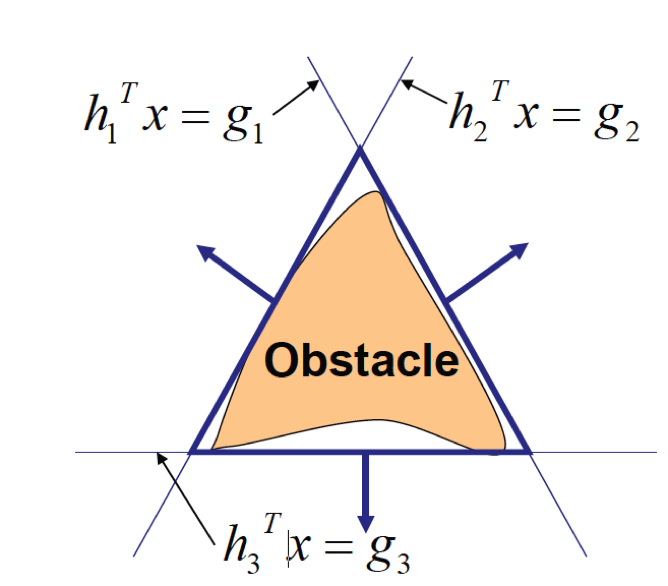}}
  		\caption{Representation of polygonal obstacle by disjunctive linear constraints \\ \\}
  		\label{fig:obtacle}
	\end{minipage}\hspace{0.05\textwidth}
    \begin{minipage}[b]{.45\textwidth}
        \hbox{\hspace{3ex}\includegraphics[width=0.8\textwidth]{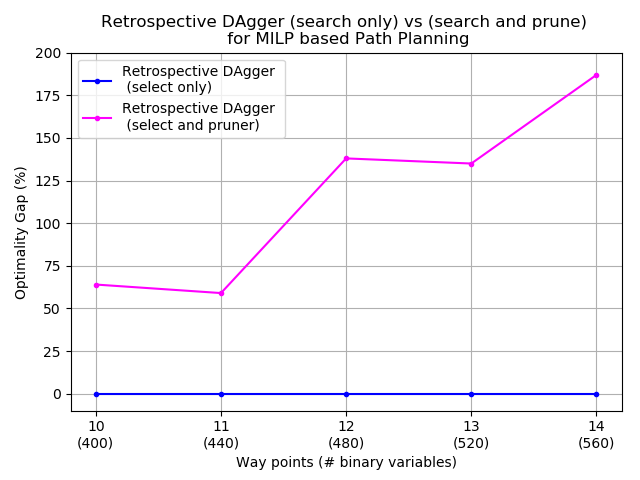}}
        \caption{Comparison of optimality gap between Retrospective DAgger (select only) and Retrospective DAgger (select and prune)}
        \label{fig:searchvsprune}
    \end{minipage}%
    \vspace{0.1in}
\end{figure}

\begin{figure}
\begin{minipage}[b]{.45\textwidth}
        \hbox{\hspace{3ex}\includegraphics[width=0.8\textwidth]{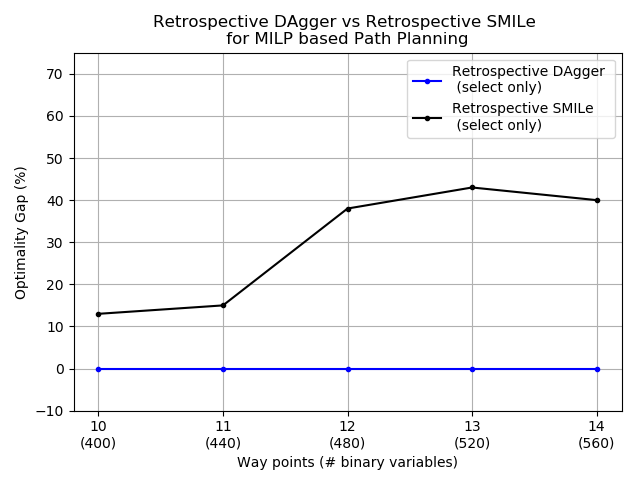}}
        \caption{Comparison of optimality gap between Retrospective DAgger (select only) and Retrospective SMILe (select only)}
        \label{fig:dagger_smile_pgap}
    \end{minipage}%
    \vspace{0.1in}
\end{figure}

The problem that we solve is, given the initial state $(\bar{x}_0, \Sigma_0)$, to find $u_1 \cdots u_N \in U$ that minimizes a linear objective function and satisfies (\ref{eq:const1}) and (\ref{eq:const2}). An arbitrary nonlinear objective function can be approximated by a piecewise linear function by introducing integer variables. The disjunction in (\ref{eq:const2}) is also replaced by integer variables using the standard Big M method. Therefore, this problem is equivalent to MILP. In the branch-and-bound algorithm, the choice of which linear constraint to be satisfied among the disjunctive constraints in (\ref{eq:const2}) (i.e., which side of the obstacle $x_k$ is) corresponds to which branch to choose at each node.

\section{Risk-aware Planning Dataset Generation}
\label{dataset}
We generate 150 obstacle maps. Each map contains $10$ rectangle obstacles, with the center of each obstacle chosen from a uniform random distribution over the space $0\leq y \leq 1$ , $0\leq x \leq 1$. The side length of each obstacle was chosen from a uniform distribution in range $[0.01, 0.02]$ and the orientation was chosen from a uniform distribution between $0\degree$ and $360\degree$. In order to avoid trivial infeasible maps, any obstacles centered close to the destination are removed. 

\section{Retrospective DAgger vs Retrospective SMILe for Maze Solving}
\label{maze_extra}
\begin{figure}[H]
\centering
\includegraphics[width=0.4\textwidth]{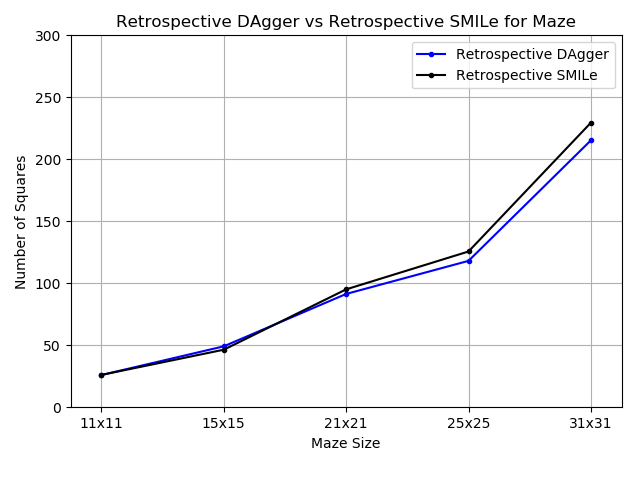}
\caption{Average explored number of squares for Retrospective DAgger and Retrospective SMILe.}
\end{figure}

\section{Additional Experiments on Risk-aware Planning}
\label{psulu_extra}
In this section, we present a comparison of Retrospective DAgger with two different policy classes for MILP based Path Planning, namely a combination of both select and prune policy as described in \citep{he2014learning} against select policy alone. We compare their optimality gap by first running the Retrospective DAgger (select only) until termination and then limiting the Retrospective Dagger (search and prune) to the same number of explored nodes. Figure \ref{fig:searchvsprune} depicts a comparison of optimality gap with varying number for waypoints. We observe that Retrospective DAgger (select only) performs much better in comparison to Retrospective DAgger (select and prune). 

Next, we present a comparison of Retrospective DAgger (select only) with Retrospective SMILe (select only). We compare the optimality gap by limiting Retrospective SMILe (select only) to the same number of nodes explored by Retrospective DAgger (select only), which is run without any node limits until termination. The results of this experiment are shown in Figure \ref{fig:dagger_smile_pgap}. Retrospective DAgger (select only) performs superior to Retrospective SMILe (select only) validating our theoretical understanding of the two algorithms.

\begin{figure}[H]
\centering
\includegraphics[width=0.5\textwidth]{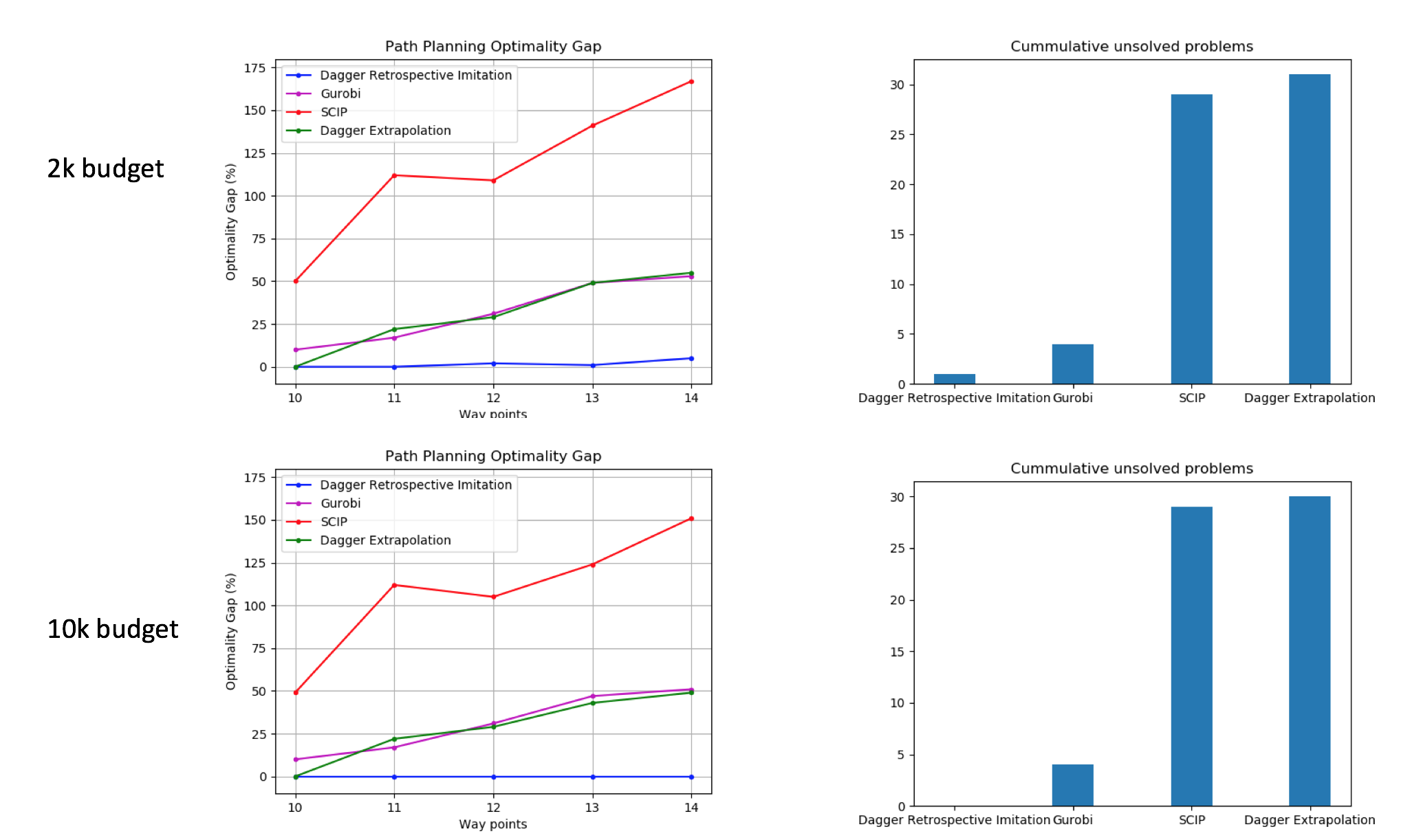}
\caption{Optimality gap comparisons and number of unsolved problem instances.}
\label{psulu_unsolved}
\end{figure}
Finally, we present statistics on how many instances of MILPs are not solved by each method when given a fixed budget on how many nodes to explore in the branch-and-bound tree. Retrospective DAgger achieves the best record among all the methods compared which implies that it is able to learn a stable and consistent solving policy.

\section{Experiments on Combinatorial Auction Test Suite}
\label{metric}

For completeness of comparison, we evaluate our approach on the same dataset as in \cite{he2014learning}, the Hybrid MILP dataset derived from combinatorial auction problems \citep{leyton2000towards}. For this experiment, we vary the number of bids, which is approximately the number of integer variables, from 500 to 730. Similar to \cite{he2014learning}, we set the number of goods for all problems to 100 and remove problems that are solved at the root. We use the select and pruner policy class to match the experiments in \cite{he2014learning} and a similar feature representation to those used in the path planning experiments.

The results of this experiment are shown in Figure \ref{fig:hybrid}. We see that neither retrospective imitation learning nor DAgger Extrapolation (\cite{he2014learning}) improves over SCIP. Upon further scrutiny of the dataset, we have found several issues with using this combinatorial auction dataset as a benchmark for learning search policies for MILPs and with the evaluation metric in \citep{he2014learning}.

Firstly, solvers like SCIP and Gurobi are well-tuned to this class of problems; a large proportion of problem instances is solved to near optimality close to the root of the branch-and-bound tree. As shown by Figure \ref{fig:our2017metric}, Gurobi and SCIP at the root node already achieve similar solution quality as SCIP and Gurobi node-limited by our policy's node counts; hence, exploring more nodes seems to result in little improvement. Thus the actual branch and bound policies matter little as they play a less important role for this class of problems.\\



\begin{figure}
\centering
    \begin{minipage}[b]{.45\textwidth}
        \centering
        \hbox{\hspace{-3ex}\includegraphics[width=1.2\textwidth]{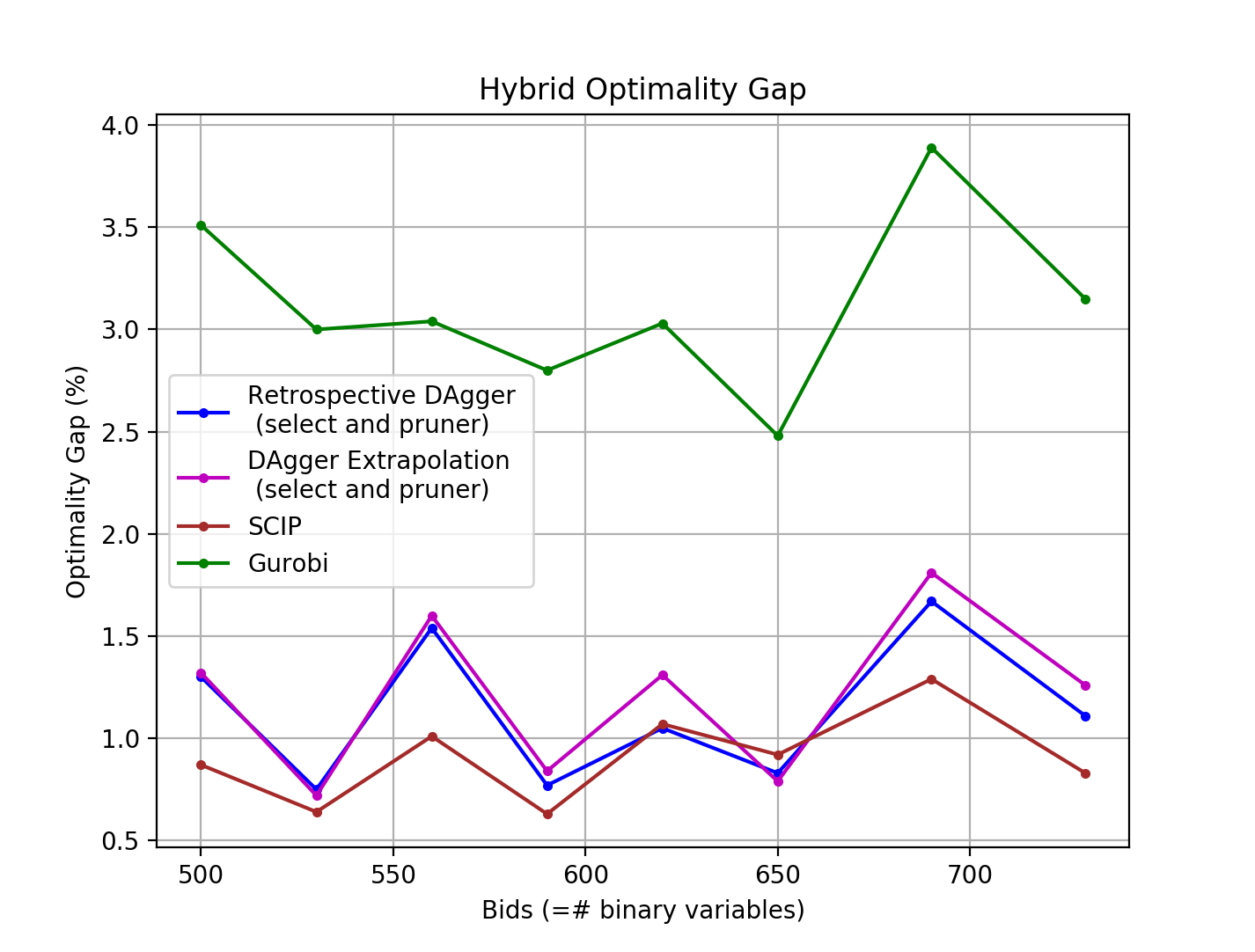}}    
        \caption{Comparison of optimality gap on the Hybrid combinatorial auction held-out test data.}
        \label{fig:hybrid}
    \end{minipage}%
\end{figure}
\begin{figure}
    \begin{minipage}[b]{.45\textwidth}
        \centering
        \hbox{\hspace{-4ex}\includegraphics[width=1.2\textwidth]{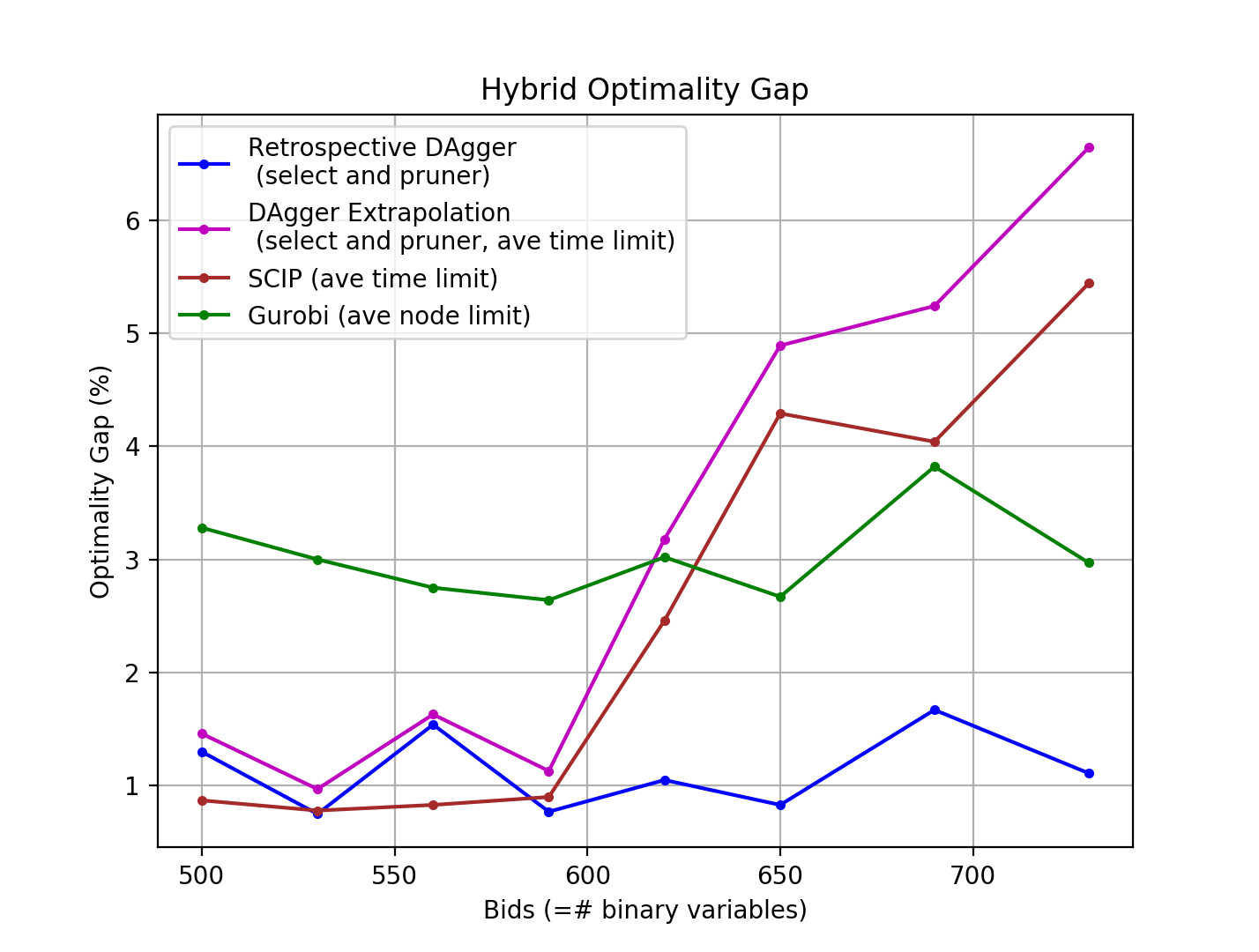}}    

        \caption{Comparison of optimality gap on the Hybrid combinatorial auction held-out test data using \citep{he2014learning} metric. \\}
        \label{fig:he2014metric}
	\end{minipage}%
    \vspace{.1in}
\end{figure}



\begin{figure}
\centering
    \begin{minipage}[b]{.45\textwidth}
        \centering
  		\hbox{\hspace{-3ex}\includegraphics[width=1.2\textwidth]{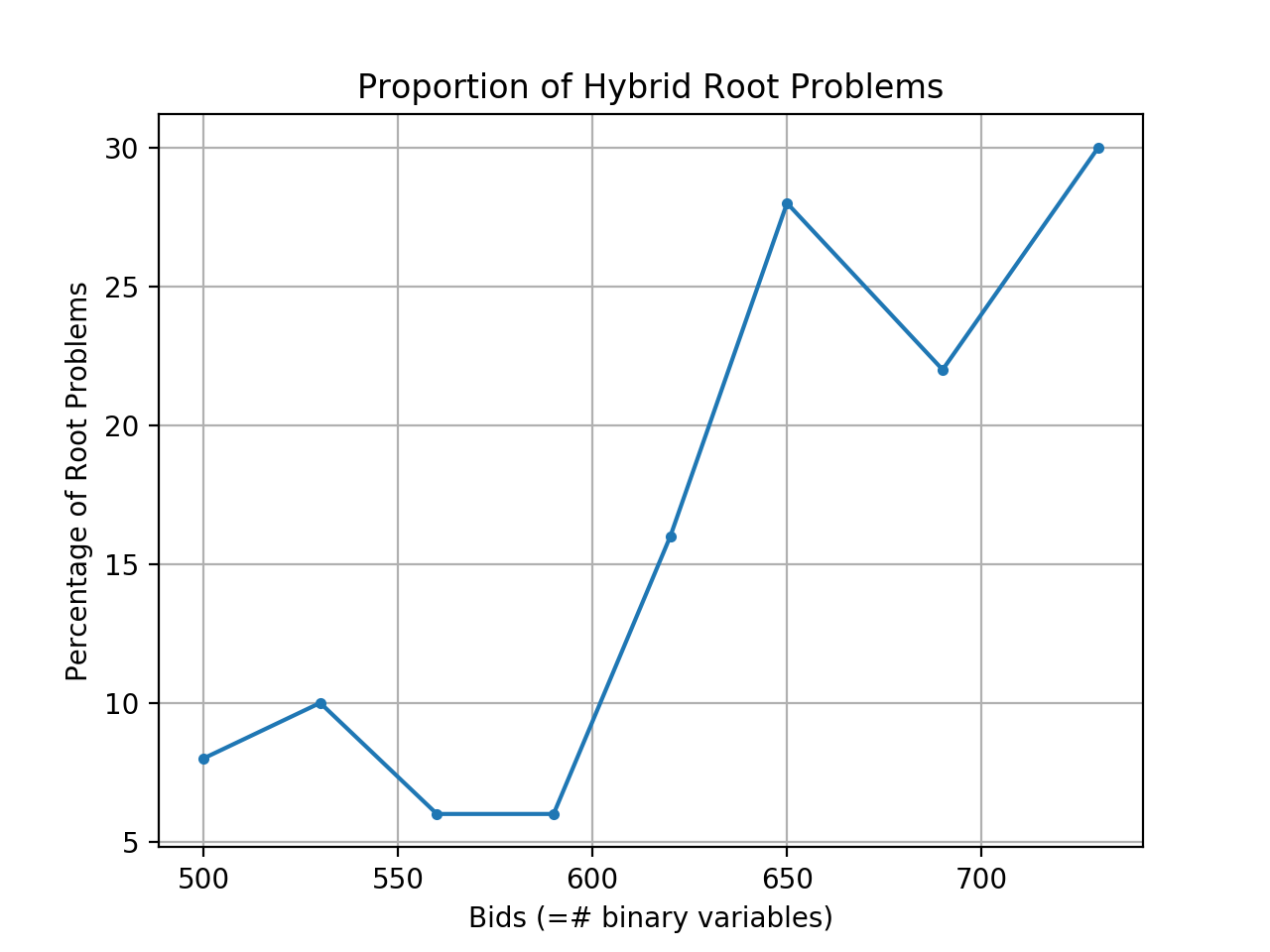}}
  		\caption{Proportion of Hybrid root problems. Root problems are problems for which SCIP limited by average runtime explores only the root node.}
  		\label{fig:rootprobprop}
	\end{minipage}%
\end{figure}
\begin{figure}
    \begin{minipage}[b]{.45\textwidth}
        \centering
        \hbox{\hspace{-4ex}\includegraphics[width=1.2\textwidth]{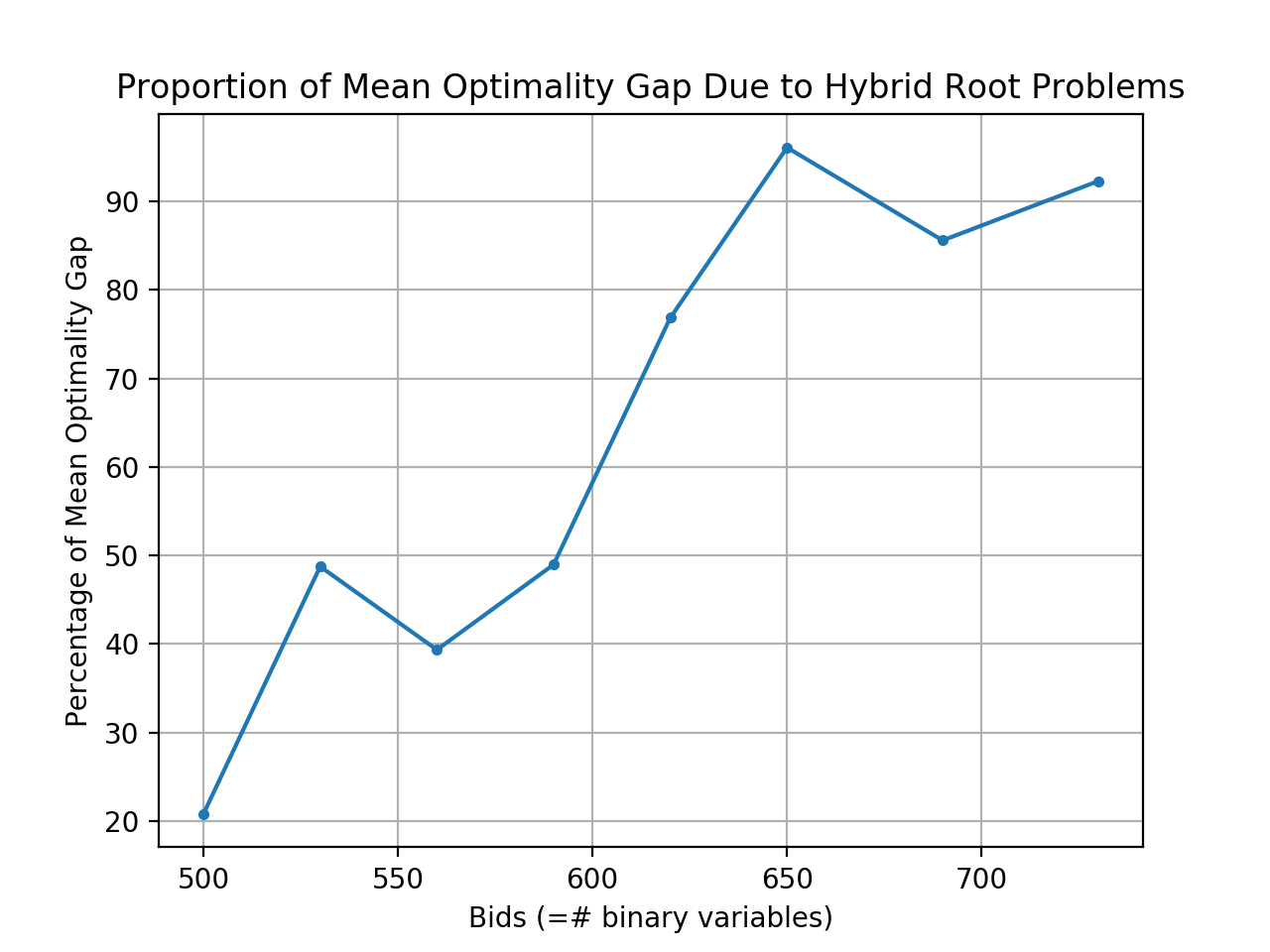}}    
        \caption{Proportion of mean optimality gap due to Hybrid root problems. \\ \\}
        \label{fig:rootprob}
    \end{minipage}%
    \vspace{.1in}
\end{figure}

Secondly, in this paper, we have chosen to use the number of nodes in a branch-and-bound search tree as a common measure of the speed for various solvers and policies. This is different from that used in \citep{he2014learning}, where the comparison with SCIP is done with respect to the average runtime. For completeness, we ran experiments using the metric in \citep{he2014learning} and we see in Figure \ref{fig:he2014metric} that retrospective imitation learning, upon scaling up, achieves higher solution quality than imitation learning and SCIP, both limited by the average runtime taken by the retrospective imitation policy, and Gurobi, limited by average node count. 

Instead of using average runtime, which could potentially hide the variance in the hardness across problem instances, using a different limit for each problem instance is a more realistic experiment setting. In particular, the average runtime limit could result in SCIP not being given sufficient runtime for harder problems, leading to SCIP exploring only the root node and a high optimality gap for these problems, which we call "root problems". As Figure \ref{fig:rootprobprop} shows, a significant proportion of the Hybrid held-out test set is root problems on larger scales. Figure \ref{fig:rootprob} shows that the majority of the mean optimality gap of SCIP limited by average runtime is due to the optimality gap on the root problems in the Hybrid dataset; for larger scale problems, this proportion exceeds 80\%, showing that limiting by average runtime heavily disadvantages SCIP.\\


\begin{figure}[t]
\centering
    \begin{minipage}[b]{.45\textwidth}
        \centering
  		\hbox{\hspace{-3ex}\includegraphics[width=1.2\textwidth]{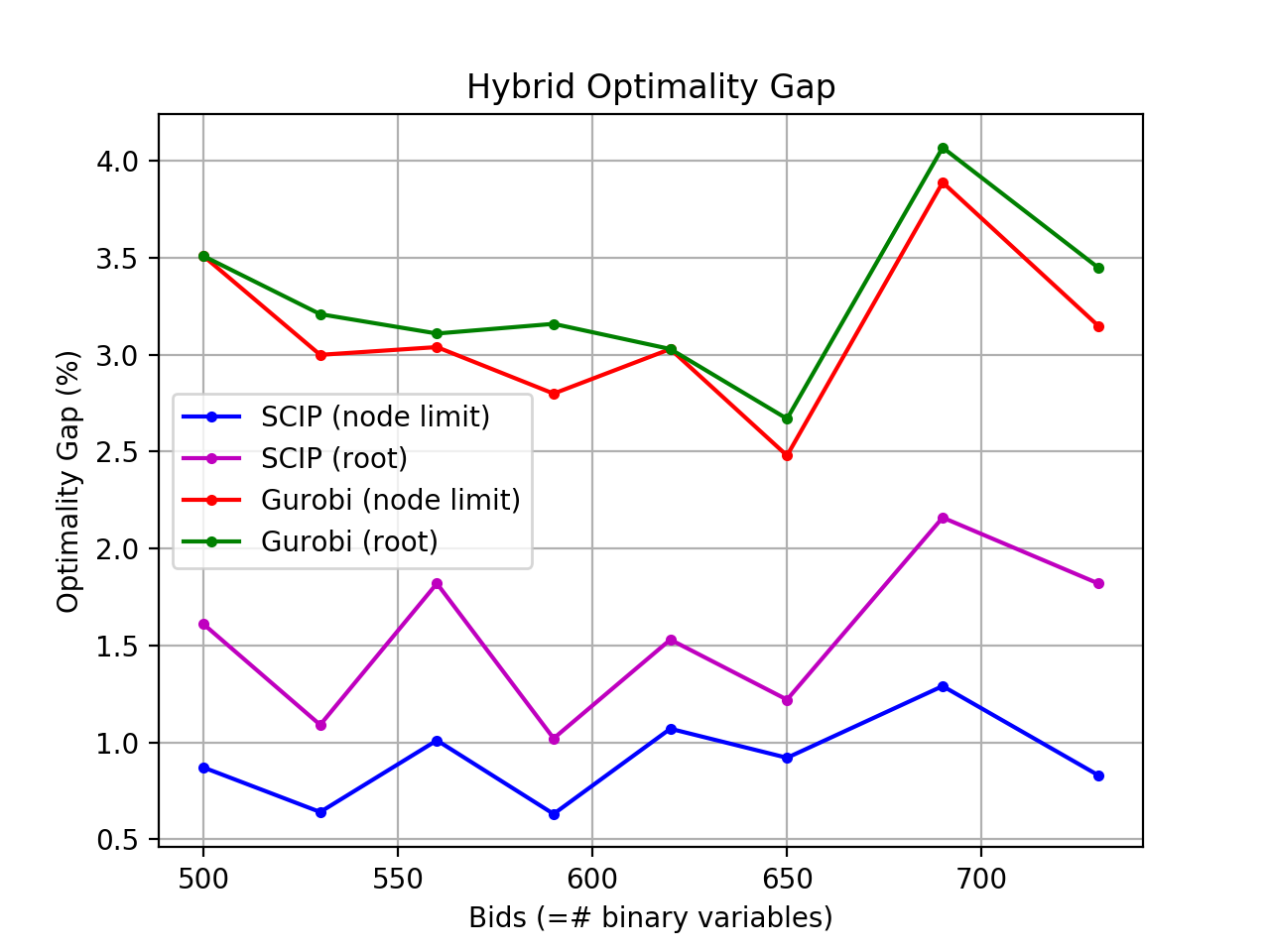}}
  		\caption{Comparison of optimality gap achieved by SCIP and Gurobi node-limited and at the root on the Hybrid combinatorial auction held-out test data.}
  		\label{fig:our2017metric}
	\end{minipage}%
\end{figure}
\begin{figure}
    \begin{minipage}[b]{.45\textwidth}
        \centering
  		\hbox{\hspace{-4ex}\includegraphics[width=1.2\textwidth]{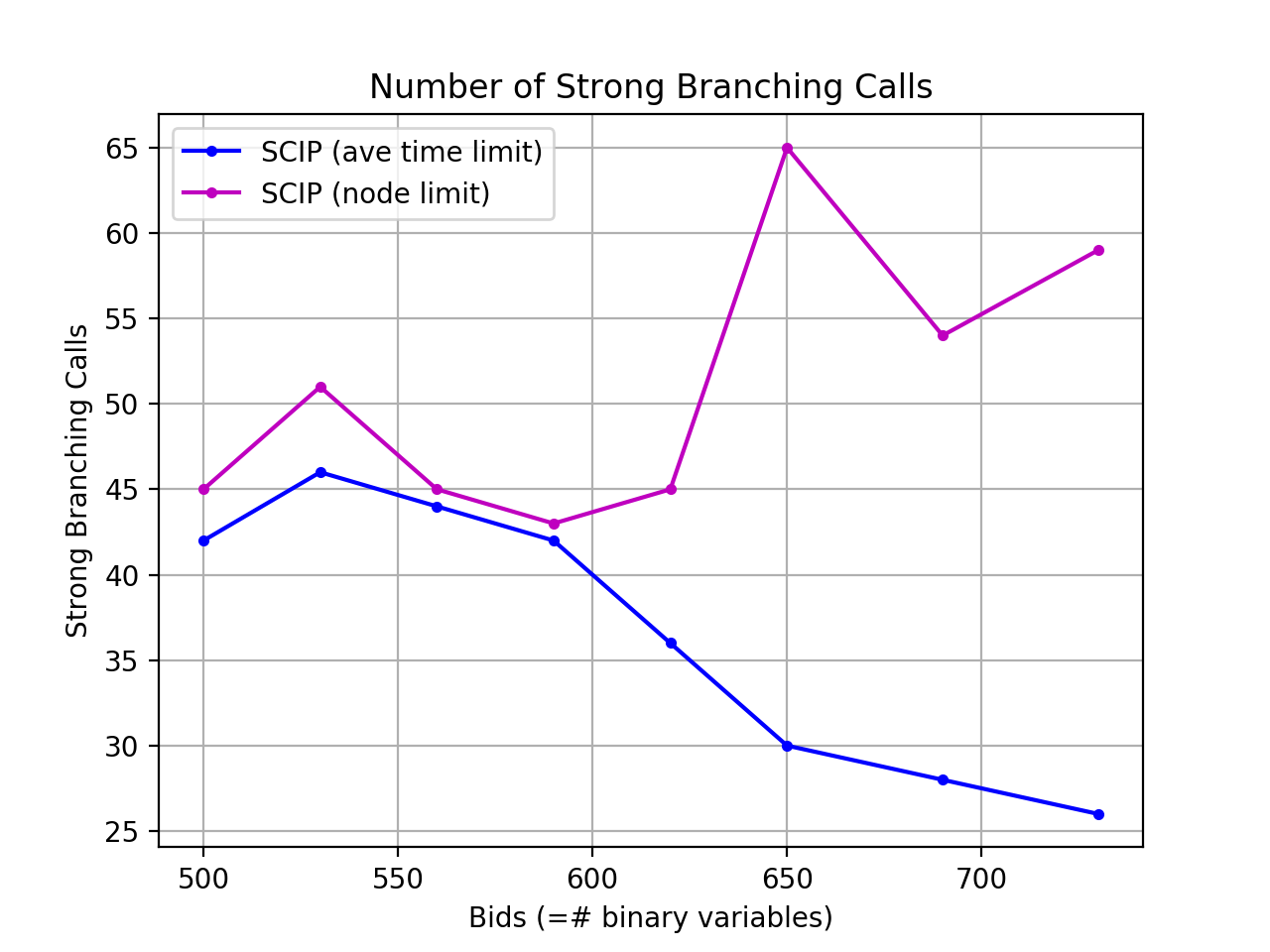}}
  		\caption{Number of strong branching calls at root for SCIP limited by average runtime and node count at every data point of retrospective imitation learning.}
  		\label{fig:sbranch_root}
	\end{minipage}%
    \vspace{.1in}
\end{figure}

Another issue is using runtime as the limiting criterion. From our observations, SCIP spends a substantial amount of time performing strong branching at the root to ensure good branching decisions early on. Limiting the runtime results in a limited amount of strong branching; as shown by Figure \ref{fig:sbranch_root}, SCIP limited by the average runtime of our retrospective imitation policies performs significantly less strong branching calls than SCIP limited by node counts, especially at larger problem sizes. In contrast, limiting the number of nodes does not limit the amount of strong branching since strong branching does not contribute to the number of nodes in the final branch-and-bound search tree. Considering the importance of strong branching for SCIP, we feel that only by allowing it can we obtain a fair comparison.\\
\\
As a result of the above reasons, we decided that the combinatorial auction dataset is not a good candidate for comparing machine learning methods on search heuristics and that the metric used in \citep{he2014learning} is not the best choice for validating the efficacy of their method.



\section{Exploration Strategy}
\label{sec:explore}
For retrospective imitation learning to succeed in scaling up to larger problem instances, it is important to enable exploration strategies in the search process. In our experiments, we have found the following two strategies to be most useful.
\begin{itemize}
\item $\epsilon$-greedy strategy allows a certain degree of random exploration. This helps learned policies to discover new terminal states and enables retrospective imitation learning to learn from a more diverse goal set. Discovering new terminal states is especially important when scaling up because the learned policies are trained for a smaller problem size; to counter the domain shift when scaling up, we add exploration to enable the learned policies to find better solutions for the new larger problem size.
\item Searching for multiple terminal states and choosing the best one as the learning target. This is an extension to the previous point since by comparing multiple terminal states, we can pick out the one that is best for the policy to target, thus improving the efficiency of learning.
\item  When scaling up, for the first training pass on each problem scale, we collect multiple traces on each data point by injecting 0.05 variance Gaussian noise into the regression model within the policy class, before choosing the best feasible solution. 
\end{itemize}

\end{document}